\documentclass{article}

\usepackage[english]{babel}

\usepackage[a4paper,top=2cm,bottom=2cm,left=3cm,right=3cm,marginparwidth=1.75cm]{geometry}

\usepackage{amsmath,amssymb,amsthm}
\usepackage{graphicx}
\usepackage[colorlinks=true, allcolors=blue]{hyperref}
\usepackage{verbatim}
\usepackage{cleveref}
\usepackage{amsmath}
\usepackage{algorithm}
\usepackage{algpseudocode}
\usepackage{booktabs}
\usepackage{tabularx}
\usepackage{natbib}
\usepackage{xcolor}
\usepackage{subcaption}
\usepackage[percent]{overpic}

\usepackage{multirow}
\usepackage[title]{appendix}

\usepackage{tikz}
\usetikzlibrary{arrows.meta,positioning}
\usepackage{tcolorbox}
\usepackage{authblk}

\usepackage{endnotes}
\usepackage{postnotes}
\postnotesetup{backlink}

\newtheorem{proposition}{Proposition}
\newtheorem{definition}{Definition}
\newtheorem*{remark}{Remark}
\newtheorem{corollary}{Corollary}

\newcommand{\tr}{tr}
\newcommand{\beatbase}[1]{{#1}$^\dagger$}
\title{Decision-Making under Combinatorial Risk}
\author[1]{Yifan Hong\thanks{Email: \texttt{hongyf23@mails.tsinghua.edu.cn}}}
\author[1]{Hongmiao Fan}
\author[1]{Chen Wang\thanks{Corresponding author. Email: \texttt{chenwang@tsinghua.edu.cn}}}
\affil[1]{Department of Industrial Engineering, Tsinghua University}
\date{}

\begin{document}
\maketitle

\begin{abstract}
Decision-making under risk is typically studied through single-shot lottery choices. Yet many real decisions involve \emph{combinatorial risk}, where risk arises from multiple risky components, so the lottery over outcomes is induced rather than given outright and can be costly to evaluate exactly.
We introduce an investment-allocation task to study decision under combinatorial risk, where investing in a component raises its success probability and thereby reshapes the outcome distribution.
Participants favor the option with the larger probability increment, and, when increments are equal, the option with the higher initial success probability. Revealing the induced probability mass function (PMF) substantially changes behavior, making participants less responsive to combinatorial-risk features and reducing choice variance.
To explain these patterns, we move beyond standard benchmarks and hand-crafted hypotheses with symbolic regression to discover compact descriptive models.
The discovered models rely mainly on combinatorial-risk features, such as the after-investment success probability, rather than exact evaluation of the full induced distribution.
Behavior under the displayed PMF is then well explained by augmenting this model with a prospect-theoretic residual model.
The results show that people navigate combinatorial risk primarily through its core features, shifting toward lottery valuation only when the induced PMF is displayed.
\end{abstract}

% \tableofcontents

\section{Introduction}
Decision-making under risk is a central topic in behavioral science, economics, and operations research. Much of the literature studies lottery choices, where each prospect is described as a known probability distribution over outcomes.
Yet many real-world decisions are not readily presented as lotteries.
In an important class of problems, the outcome arises from multiple risky components whose joint consequences determine the payoff, and no single lottery is given in advance.

Consider the allocation of a limited medical budget between two interventions: ventilators or medicines.
Ventilators improve survival for severely ill patients but do not help patients with low-to-moderate symptoms; medicines reduce mortality for low-to-moderate patients but do not save severely ill patients. The outcome of interest is the total number of survivors.
Although each decision induces a distribution over possible survival counts, that distribution is not directly presented at the time of choice.
Do decision makers represent the problem as induced lotteries, or do they rely on different strategies tailored to the combinatorial structure of the problem?

We refer to this class of problems as \emph{combinatorial risk}: risk arises from the combination of multiple risky components.
We use a simple investment-allocation task as a canonical instance, and adopt a commercial framing to avoid the moral considerations that could confound choices in a medical context.
Participants act as a supplier deciding which of two customers to target with a promotional investment.
Each customer has an initial probability of purchase, the promotion increases the purchase probability of the targeted customer, and the participant's payoff depends on the total number of purchases across the two customers. 

We find that people prefer investments with higher expected value and lower variance, even though these quantities are only implicit in the combinatorial-risk features.
The expected value of an investment is governed by its probability increment, whereas its variance also depends on the initial success probabilities.
Participants respond systematically to both: when one option offers a larger increment, they tend to choose it; when the increments are equal, they tend to choose the option with the higher initial, and therefore after-investment, success probability.
An interesting question is how providing the induced lotteries changes behavior. 
We compare a control condition, in which participants see only the combinatorial-risk features, with a treatment condition, in which they additionally see the induced PMFs over total successes.
Revealing the PMF substantially changes choice patterns, yet it does not make decisions more advantageous in terms of expected value.
Participants given the PMFs become less responsive to the combinatorial-risk features, and the way their choices respond to payoff magnitude also differs from the control condition.

We develop descriptive models to understand the behavioral patterns. Prospect-theoretic models on the induced lotteries achieve strong predictive performance, but they are behaviorally implausible when the PMF is not readily available.
To move beyond a small set of hand-crafted theories, we employ a symbolic regression (SR) method to discover descriptive models. 
The method combines evolutionary search with LLM-based generation, and organizes discovered expressions into an ontology that guides exploration. % of recurring features, functional forms, and concepts 
The discovered models show that behavior is organized primarily around salient quantities: after-investment success probabilities and probability increments.
In the control condition, the best models rely on these combinatorial-risk features without exact expected-payoff evaluation.
In the treatment condition, the best models combine these features with nonlinear transformations of induced-lottery features.
A residual analysis further shows that the control-to-treatment shift can be captured by layering a prospect-theoretic evaluation of the displayed PMF on top of the control model.
Taken together, these results indicate that people do not represent combinatorial risk as an explicit lottery. Instead, they reason over salient probabilistic features, and utilize distributional information as an added valuation layer when available.

The paper proceeds as follows.
\Cref{sec:related-work} reviews decision under risk and symbolic regression. 
\Cref{sec:formulation} formally introduces the investment-allocation task and describes the experiment design.
\Cref{sec:formulation-behavioral-finding} presents the behavioral findings.
We then proceed to descriptive modeling, with
\Cref{sec:benchmarks} presenting benchmark models and
\Cref{sec:experiments} using symbolic regression to systematically search for symbolic models, and
\Cref{sec:residual-analysis} explaining the impact of PMFs via residual analysis.
\Cref{sec:discussion} discusses the findings, limitations, and future work, and \Cref{sec:conclusion} concludes the paper.

\section{Related work}\label{sec:related-work}
\subsection{Decision-making under Risk} Decision-making under risk involves a choice between lotteries, each defined by a known probability distribution over outcomes. A lottery $A$, denoted \( (p_A,x_A), \) yields outcome $x_{Ai}$ with probability $p_{Ai}$ for $i=1,2,...,n$. Consider the choice between $A$ and $B$. The long-run optimal policy for repeated play is to maximize \emph{expected-value} \( \mathbb{E}_{p}[x] = p^\top x \).
However, human choices deviate from expected-value maximization. A canonical illustration is the St.\ Petersburg paradox: people will not pay arbitrarily much for a gamble with unbounded expected value. This motivates separating objective outcomes from subjective value \citep{parmigiani2009decision}. The resulting normative framework is \emph{expected utility} theory, where rational choice is characterized by maximizing expected utility, \( \mathbb{E}_{p}\left[u(x)\right] = \sum_{i=1}^n p_{i}\cdot u(x_{i}), \) where $u(\cdot)$ is the utility function, applied element-wise on each outcome. Researchers have since proposed families of utility functions to capture risk attitudes and preferences \citep{pratt1964risk}.
Prospect theory (PT) highlights further systematic deviations from expected utility and argues that people transform decision problems through mental editing, evaluate outcomes relative to a reference point, and apply \emph{decision weights} rather than objective probabilities \citep{Kahneman1979prospect}. For example, small probability-events are over-weighted in decisions \citep{burns2010overweighting}. Later work continues to document anomalies and has developed increasingly accurate descriptive models \citep{erev2017anomalies,peterson2021using}.

\subsection{Symbolic Regression}
Symbolic regression aims to recover an interpretable symbolic expression from data. 
Given a dataset $\mathcal{D}=\{X,y\}$ and a set of primitives, SR searches for an expression $f\in\mathcal{T}$ that fits the data well while remaining simple enough to interpret. 
This trade-off is often formulated either as multi-objective optimization over predictive fit and complexity, or as empirical risk minimization under an explicit complexity constraint:
\[
f^\star \in \arg\min_{f\in \mathcal{T}_D}\; \ell(f,\mathcal{D}),
\]
where $\mathcal{T}_D$ denotes the set of expression trees of bounded depth $D$. 
The problem is computationally challenging: SR is NP-hard \citep{virgolinsymbolic2022}, and the number of candidate expressions grows exponentially with the tree depth \citep{kim2023learning}.

A large body of research focuses on \textbf{search-based SR}, where the central challenge is to navigate the combinatorial expression space efficiently. Classical approaches represent expressions as trees and use genetic programming (GP) to evolve candidate formulas through selection, crossover, and mutation. Despite the rise of newer paradigms, state-of-the-art GP methods remain highly competitive on SR benchmarks \citep{burlacu2020operon,cranmer2023interpretable}.
%More recent work also considers richer representations and decomposition-based search; for example, \citet{kahlmeyer2025dimension} use directed acyclic graphs with functional dependence tests to reduce problem dimensionality and guide beam search.

More recently, \textbf{generative SR} treats SR as a sequence generation problem. These methods train neural or reinforcement-learning models on existing or synthesized formulas so that plausible expressions can be produced directly at inference time. Examples include autoregressive RNNs \citep{petersen2021deep} and Transformer-based models \citep{kamienny2022end}. Hybrid approaches further combine generation with search, for example by using GP to refine generated expressions \citep{holt2023deep,ying2025neural}. Generative SR can achieve competitive accuracy with substantially faster inference, but its effectiveness often depends on pretraining over large corpora of real or synthetic expressions.
%, while KAN-based approaches exploit compositions of univariate functions for feature pruning and symbolic fitting \citep{liu2025kan,liu2024kan2}.
% Related work also uses learned univariate shape functions to improve interpretability beyond closed-form equations \citep{kacprzyk2024shape}.

Finally, \textbf{LLM-assisted SR} extends generative SR by leveraging the broad mathematical and semantic priors of large language models. 
Beyond generating candidate expressions, LLMs can reason about the qualitative structure of formulas and suggest useful abstractions or operators. 
For example, they have been used to iteratively improve candidate expressions from feedback \citep{merler2024context}, extract semantic concepts from high-performing formulas to guide GP \citep{grayeli2024symbolic}, and propose new operands for reinforcement-learning-based SR \citep{guo2025sr}. 
These developments suggest that modern SR is evolving from pure combinatorial search toward a hybrid paradigm that combines search, neural inductive bias, and semantic prior knowledge.
\section{Decision under Combinatorial Risk}\label{sec:formulation}

We formalize a simple investment-allocation problem as a canonical instance of decision under combinatorial risk.
In this problem, each action modifies a component-level success probability, and the distribution over outcomes is induced by the combination of multiple risky components.
We characterize how these combinatorial-risk features determine the expected values and variances of the induced lotteries, and then describe the experimental design used to collect behavioral data.

\subsection{Problem Description}\label{sec:formulation-description}

\begin{figure}[htbp]
    \centering
    \includegraphics[width=1\linewidth]{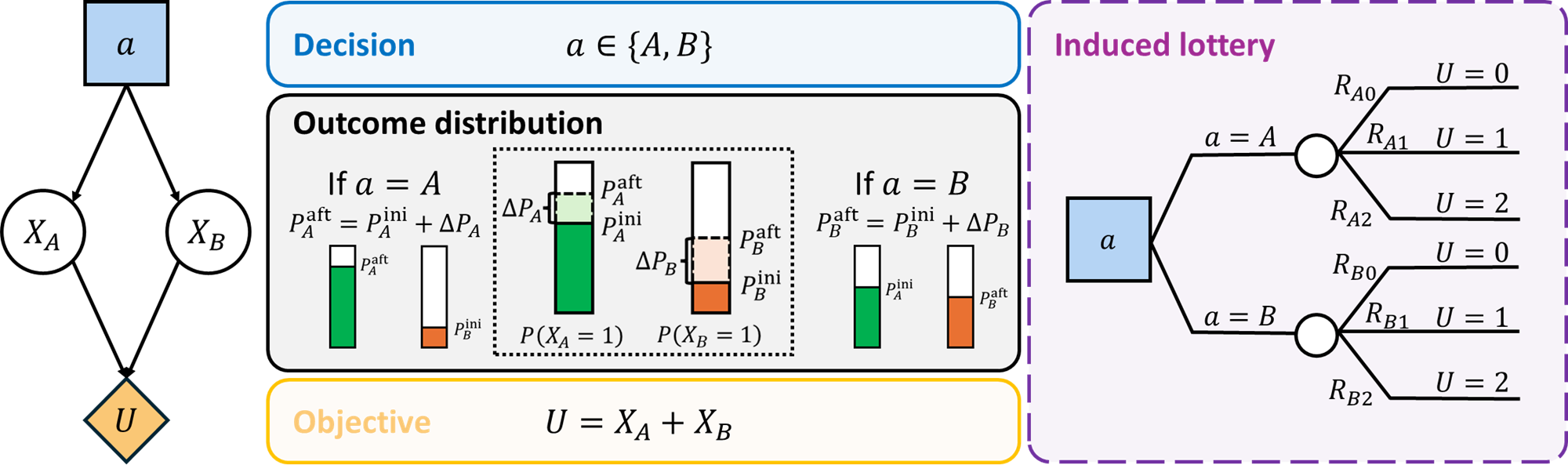}
    \caption{Influence diagram of the two-component investment-allocation problem. The decision selects one component to invest in, increases its success probability, and induces a lottery over the total number of successes.}
    \label{fig:problem-description}
\end{figure}

We formalize a two-component investment-allocation problem as a minimal instance of decision under combinatorial risk 
(see \Cref{fig:problem-description}; notation is summarized in \Cref{tab:notation-investment-problem}). 
Consider two independent Bernoulli components, denoted \(A\) and \(B\). 
Before the decision, their success probabilities are \(P_A^{\mathrm{ini}}\) and \(P_B^{\mathrm{ini}}\). 
The decision maker can make a single, indivisible investment in one of the two components, with the objective of maximizing the total number of successes $U=X_A+X_B$. 
Investing in \(A\) increases its success probability by \(\Delta P_A\) to \(P_A^{\mathrm{aft}} = P_A^{\mathrm{ini}} + \Delta P_A\), whereas investing in \(B\) increases its success probability by \(\Delta P_B\) to \(P_B^{\mathrm{aft}} = P_B^{\mathrm{ini}} + \Delta P_B\). The unchosen prospect remains unchanged. 
Throughout the paper, we refer to the random variables ($X_A$ and $X_B$) as \emph{components}, while the choice of investing in a component as an \textit{option}. Here, each component corresponds to a Bernoulli distribution, while each \textit{option} induces a Binomial distribution.

\subsection{Induced Lotteries}\label{sec:formulation-property}

Each investment choice induces a lottery, a distribution over the total successes from two Bernoulli trials, with support \(\{0,1,2\}\). 
Let \((R_{A0}, R_{A1}, R_{A2})\) denote the probability mass function over \(\{0,1,2\}\) when the investment is allocated to \(A\), and let \((R_{B0}, R_{B1}, R_{B2})\) denote the analogous distribution when the investment is allocated to \(B\).
These \emph{induced lotteries} provide an explicit representation of risk over outcomes.
Denote the induced lotteries as $(S_A, S_B)$, where $S_i$ stands for the outcome distribution if the decision is $a=i\in\{A,B\}$.

The expected value of the induced lotteries are
\(
\mathbb{E}[S_A]
=
P_A^{\mathrm{ini}}+P_B^{\mathrm{ini}}+\Delta P_A,
\,
\mathbb{E}[S_B]
=
P_A^{\mathrm{ini}}+P_B^{\mathrm{ini}}+\Delta P_B .
\)
The difference in expected value is determined by the difference in probability increment
\[
\mathbb{E}[S_A]-\mathbb{E}[S_B]
=
\Delta P_A-\Delta P_B .
\]
When an option has higher probability increment, choosing it is equivalent to choosing the induced lottery with the higher expected value.
The difference in the variances of the induced lotteries is
\[
\operatorname{Var}(S_A)-\operatorname{Var}(S_B)
=
\Delta P_A(1-2P_A^{\mathrm{ini}}-\Delta P_A)
-
\Delta P_B(1-2P_B^{\mathrm{ini}}-\Delta P_B).
\]
When the two options have the same increment \(\Delta P_A=\Delta P_B=\Delta P\), their induced lotteries have the same expected value, and the variance difference simplifies to
\[
\operatorname{Var}(S_A)-\operatorname{Var}(S_B)
=
2\Delta P\bigl(P_B^{\mathrm{ini}}-P_A^{\mathrm{ini}}\bigr).
\]
Thus, when equal probability increments offer same expected values, choosing the option with the higher initial success probability is equivalent to choosing the induced lottery with lower variance. 

%     Are choices primarily driven by local probability gains \((\Delta P_A,\Delta P_B)\), by post-investment levels \((P_A^{\mathrm{aft}},P_B^{\mathrm{aft}})\), or by distributional features of the induced PMFs, such as \((R_{A0},R_{A1},R_{A2})\) and \((R_{B0},R_{B1},R_{B2})\), or tail probabilities? 

\subsection{Experiment Design}
The experiment was designed to measure how people make decisions under combinatorial risk and how their choices change when the induced lottery is provided.
We implemented the investment-allocation problem using a vendor-marketing framing. The framing preserves the structure of the problem while avoiding moral considerations that could confound decisions in medical contexts.

On each trial, participants acted as a supplier deciding which of two customers, \(A\) or \(B\), to target with a promotion. Each customer had an initial probability of making a purchase, and the promotion increased the purchase probability of the targeted customer. 
Each successful purchase yielded a fixed payoff, and the participant's payoff depended on the total number of purchases across the two customers.
An example problem from the treatment condition is shown in \Cref{fig:experiment-problem-example}.

\begin{figure}[htbp]
    \centering
    \includegraphics[width=0.7\linewidth]{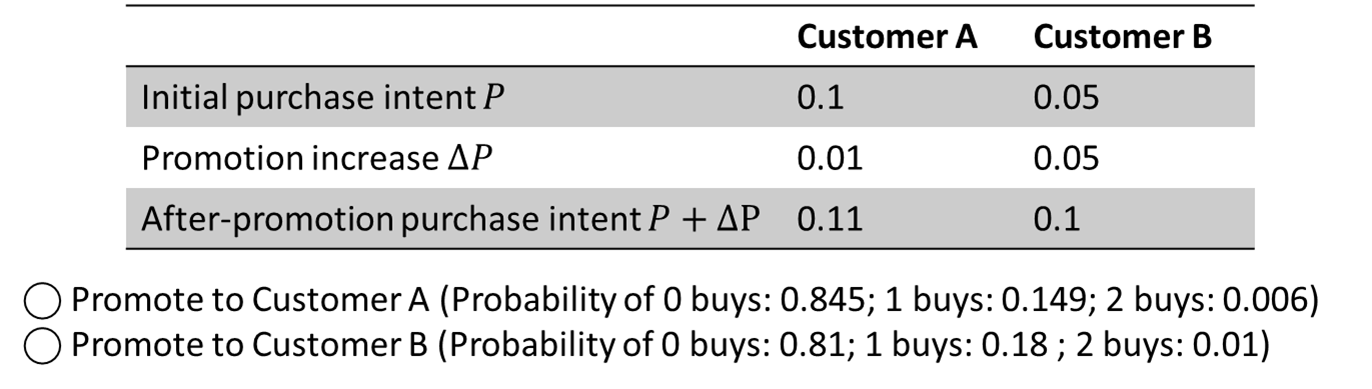}
    \caption{Example of problem from the treatment condition. The same problem from the control condition removes the purchase probabilities in the brackets.}
    \label{fig:experiment-problem-example}
\end{figure}

\paragraph{Experiment conditions}
Experiment conditions are designed to investigate whether providing the induced lottery PMFs change decision behavior.
Subjects are randomly assigned to an information condition \(g\in\{\mathrm{T},\mathrm{C}\}\). Treatment (T) subjects are provided the probability mass functions \((R_{A0}, R_{A1}, R_{A2})\) and \((R_{B0}, R_{B1}, R_{B2})\) at the time of choice, whereas control (C) subjects are not shown these PMFs. 
We also use a within-subject design to study sensitivity to payoff magnitudes.
Each participant experiences two magnitude conditions \(c\in\{\text{low}, \text{high}\}\) that differ only in the absolute magnitude of the investment, with condition \(\text{low}\) corresponding to the lower magnitude (\$30) and condition \(\text{high}\) corresponding to the higher magnitude (\$100).
We observe aggregate choices in each \((g,c)\) condition cell as counts of selecting \(A\) and \(B\), denoted \(N_{g,c,A}\) and \(N_{g,c,B}\), from which the empirical probability of choosing \(B\) (bRate) is calculated \(\widehat{p}_{g,c} = N_{g,c,B}/(N_{g,c,A}+N_{g,c,B})\).

\begin{table}[htbp]
\centering
\caption{Notation for the investment choice problem.}
\label{tab:notation-investment-problem}
\small
\begin{tabular}{ll}
\hline
Symbol & Meaning \\
\hline
\multicolumn{2}{l}{\textit{Problem Features}}\\
\(A,B\) & Two Bernoulli components, corresponding to Bernoulli variables $X_A$ and $X_B$\\
\(P_A^{\mathrm{ini}}, P_B^{\mathrm{ini}}\) & Initial success probabilities \\
\(\Delta P_A, \Delta P_B\) & Investment-induced probability increments \\
\(P_A^{\mathrm{aft}}, P_B^{\mathrm{aft}}\) & Post-investment success probabilities implied by the chosen action \\
% \(\mathrm{ANV}\) & Increment difference, sign(\(\Delta P_A - \Delta P_B\)) \\
\(R_{A0}, R_{A1}, R_{A2}\) & PMF of total successes \(0,1,2\) under option \(A\) \\
\(R_{B0}, R_{B1}, R_{B2}\) & PMF of total successes \(0,1,2\) under option \(B\) \\
\midrule
\multicolumn{2}{l}{\textit{Experiment Conditions and Responses}}\\
\(g\) & Information condition, \(g\in\{\mathrm{T},\mathrm{C}\}\) \\
\(c\) & Magnitude condition, \(c\in\{\text{low},\,\text{high}\}\) \\
\(N_{g,c,A}, N_{g,c,B}\) & Counts of choosing \(A\) or \(B\) in group \(g\), condition \(c\) \\
\(\widehat{p}_{g,c}\) & Target, empirical probability of choosing \(B\) (bRate), \(N_{g,c,B}/(N_{g,c,A}+N_{g,c,B})\) \\
\hline
\end{tabular}
\end{table}

\paragraph{Stimuli}
We generated a stimulus pool of 1,873 decision problems from a discrete grid
$\mathcal{P} = \{0,\,0.01,\,0.05,\,0.1,\,0.2,\,0.3,\,0.4,\,0.5,\,0.6,\,0.7,\,0.8,
\,0.9,\,0.95,\,0.99,\,1\}$. %of 1,873 unique 
For each component, $P^{\mathrm{ini}} \in \mathcal{P}$ and
$\Delta P \in \mathcal{P}\setminus\{0\}$ were chosen subject to
$P^{\mathrm{ini}} + \Delta P \le 1$.
Problems $(A, B)$ were retained only when the two investment increments were
comparable, i.e.\ $\Delta P_B \in [\Delta P_A,\;\Delta P_A + 0.10]$,
and the labels $A$/$B$ were swapped with probability 0.5 to remove position bias.
The problems are grouped into 132 batches with 15 problems each.
Each participant was assigned to exactly one batch.

\paragraph{Procedure}
The 15 problems were presented twice under different magnitude conditions, yielding 30 decisions per participant.
The payoff of one successful sale is 30 and 100 units for the low and high magnitude conditions, respectively.
Before answering, participants take a comprehension quiz to verify understanding of the task mechanics.
Incentives were performance-based: one trial was selected at random,
the outcome was resolved by sampling from the choice's induced PMF.
For each of the problems we record choice counts
$(N_{g,c,A},\,N_{g,c,B})$, from which $\widehat{p}_{g,c}$ is calculated.

\paragraph{Participants}
The experiment was conducted online via Credamo with a total of $N = 2640$ participants (1540 female; $M_\text{age}=27.4$), leading to 20 choices per problem. Six were excluded due to low answering time.
73.2\% reported no prior experience with risk decision experiments.
Median task completion time was 691\,s (IQR: 532--938\,s).
Participants were randomly assigned to an information condition:
$n_{\mathrm{T}} = 1314$ (treatment) and $n_{\mathrm{C}} = 1320$ (control).
Participants are compensated properly for their time, and are incentivised via a performance-based bonus.

\section{Behavioral Patterns}\label{sec:formulation-behavioral-finding}
In this section we explore the choice patterns under combinatorial risk.
We first examine how participants respond to key problem features by looking at the preference for the dominant option. Specifically, we ask how they choose when one option offers a higher probability increment, and when both options offer the same increment but one has a higher initial, and therefore post-investment, success probability.
Then, we investigate the effect of payoff magnitude and information treatment, respectively.

\subsection{Preference for the Dominant Option}
As shown in \Cref{sec:formulation-property}, the features of a combinatorial-risk problem determine both the expected value and the variance of its induced lotteries. We now examine whether participants favor the dominant option, either because it yields a higher expected value or because it yields lower variance when expected values are equal.

\begin{figure}[htbp]
    \centering
    \includegraphics[width=0.6\linewidth]{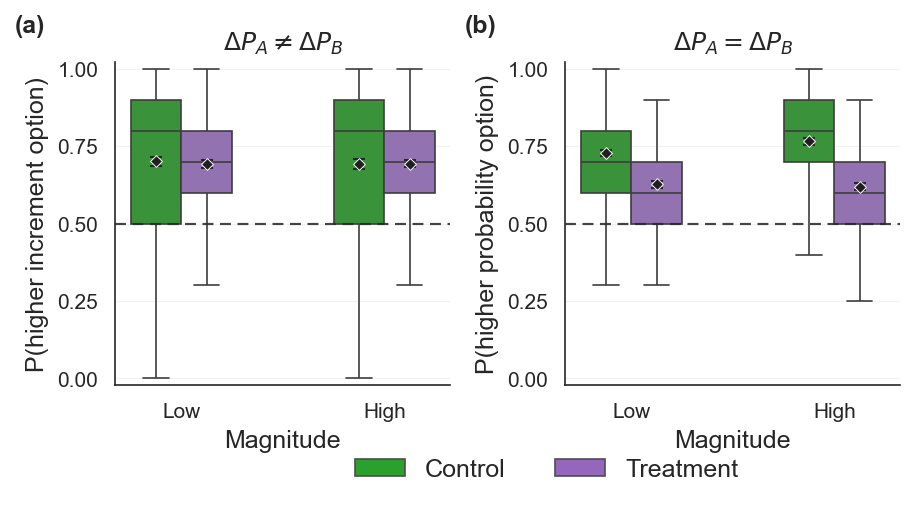}
    \caption{Boxplot of proportion of choosing the dominant option, with mean proportion marked along with 95\% confidence interval. (a) Proportion of choosing the option with higher probability increment in problems with $\Delta P_A\ne \Delta P_B$, and (b) proportion of choosing the higher initial, and thus after-investment, probability option in problems with $\Delta P_A=\Delta P_B$.}
    \label{fig:behavior-dominant}
\end{figure}

When one option $i^\ast\in\{A,B\}$ is dominant with a larger increment $\Delta P_{i^\ast}$, participants in both the treatment condition (low-magnitude: $t(1015)=32.65,\, p<0.001$, Cohen's $d=1.025$; high-magnitude: $t(1015)=31.99,\, p<0.001$, Cohen's $d=1.004$) and the control condition (low-magnitude: $t(1015)=25.66,\, p<0.001$, Cohen's $d=0.805$; high-magnitude: $t(1015)=23.29,\, p<0.001$, Cohen's $d=0.731$) favor the dominant option (see \Cref{fig:behavior-dominant} (a)).

When the two options have the same increment $\Delta P_A=\Delta P_B$, participants in both the treatment (low-magnitude: $t(856)=22.45,\, p<0.001$, Cohen's $d=0.767$; high-magnitude: $t(856)=21.27,\, p<0.001$, Cohen's $d=0.727$) and the control condition (low-magnitude: $t(856)=42.60,\, p<0.001$, Cohen's $d=1.456$; high-magnitude: $t(856)=51.435,\, p<0.001$, Cohen's $d=1.758
$) tend to choose the option with the higher initial, and therefore after-investment, success probability (see \Cref{fig:behavior-dominant} (b)).
This pattern is consistent with the \textit{certainty effect}: allocating the increment to the option with a higher initial success probability pushes the higher of the two success probabilities closer to one, which reduces outcome variance.

\subsection{Effect of Payoff Magnitude}\label{sec:formulation-behavior-magnitude}

\begin{figure}[htbp]
    \centering
    \includegraphics[width=0.6\linewidth]{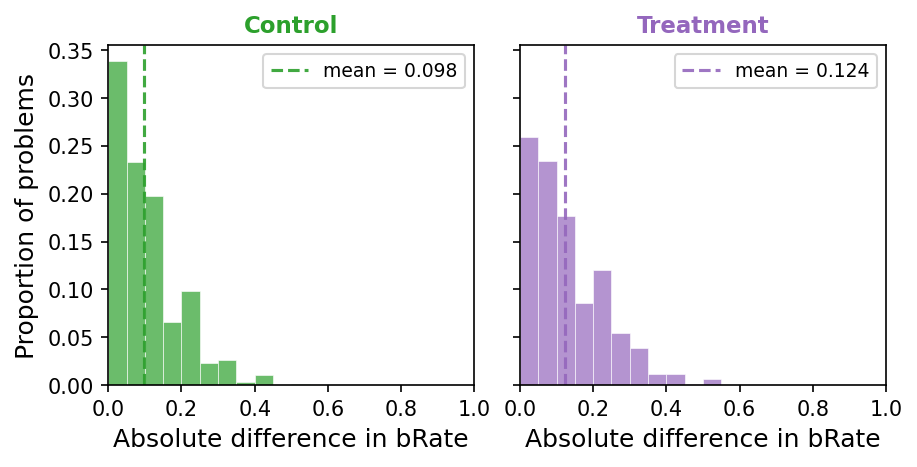}
    \caption{Absolute difference in choice probability between high- and low- magnitude problems.}
    \label{fig:behavior-magnitude}
\end{figure}

Payoff magnitude changes choice patterns in 66.2\% of the problems for the control group and 74.1\% for the treatment group, suggesting participants are sensitive to payoff magnitudes.
Furthermore, the effect of payoff magnitude is problem-dependent, and the dependency is affected by the information treatment.

For the control group participants, higher magnitude leads to higher sensitivity to combinatorial risk features: the change in bRate when the payoff magnitude increases is predicted by $P_B^\text{ini} - P_A^\text{ini}$ (Pearson's $r=0.1837$, $p<0.001$), $P_B^\text{aft} - P_A^\text{aft}$ (Pearson's $r=0.1802$, $p<0.001$).
In terms of the induced lottery, participants are more sensitive to the difference in variance $\text{Var}(S_B)-\text{Var}(S_A)$ when payoff magnitude is higher (Pearson's $r=-0.1836$, $p<0.001$).

Meanwhile, the treatment group exhibits different pattern of dependency on the payoff magnitude. None of the previously mentioned predictors are effective when the PMF is provided.
% (Pearson's $r=-0.0066$, $p=0.775$; Pearson's $r=-0.0051$, $p=0.826$; Pearson's $r=-0.0080$, $p=0.730$).
In fact, the change in bRate as payoff magnitude increases are uncorrelated for the treatment and control group (Pearson $r = -0.0012$, $p = 0.960$), suggesting systematically different decision-making strategies given the induced lotteries explicitly.
% The difference in proportion of choosing $B$ between low- and high-magnitude problems centers around 0 in both the treatment condition (paired $t$-test: $t(1872)=0.0789,\,p=0.8946$, Cohen's $d=0.0095$) and the control condition (paired $t$-test: $t(194)=0.5081,\, p=0.6119$, Cohen's $d=0.0365$).

\subsection{Effect of Information Treatment}
\Cref{fig:behavior-dominant} shows that participants in different information treatment conditions exhibit different responses to differences in options, as measured by $\Delta P_B-\Delta P_A$ and $P_B^{\text{ini}}-P_A^{\text{ini}}$.
When one of the options has higher probability increment, the treatment and control groups have similar preferences for the dominant option (low-magnitude: $t(1015)=-1.129,\,p=0.2592$; high-magnitud: $t(1015)=0.058,\,p=0.9540$), while participants in the treatment condition exhibit smaller variance in choice probabilities across problems (low-magnitude: $F(1,2030)=84.421,\, p<0.001$; high-magnitude: $F(1,2030)=104.400,\, p<0.001$).
When the two options have the same increment, participants in the control condition has significantly stronger preference for the higher initial-probability option than the treatment participants (low-magnitude: $t(857)=13.035,\,p<0.001$, Cohen's $d=0.445$; high-magnitud: $t(857)=19.156,\,p<0.001$, Cohen's $d=0.654$).

Notably, providing the induced lottery PMF does not make it easier to achieve higher payoff.
Although their choice patterns differ significantly, both groups achieve a similar amount of successes, with treatment group achieving slightly lower
(
low-magnitude: $t(1873)=-1.0403,\,p=0.2983$, Cohen's $d=-0.024$;
high-magnitude: $t(1873) = -0.0639,\, p=0.949$, Cohen's $d =-0.002$
).
\section{Benchmark Models}\label{sec:benchmarks}
\subsection{Heuristic Models}

Let the decision maker choose between investing in \(A\) and investing in \(B\). If \(A\) is selected, the investment changes only the success probability of \(A\), yielding post-investment probabilities \(P_A^{\mathrm{aft}} = P_A^{\mathrm{ini}} + \Delta P_A\). If \(B\) is selected, then \(P_B^{\mathrm{aft}} = P_B^{\mathrm{ini}} + \Delta P_B\). Under either action, the induced outcome distribution for the total number of successes \(U \in \{0,1,2\}\) is the convolution of two independent Bernoulli trials with parameters \((p_1,p_2)\). In particular, \((p_1,p_2) = (P_A^{\text{aft}},P_B^{\text{ini}})\) if A is chosen, and \((p_1,p_2) = (P_A^{\text{ini}},P_B^{\text{aft}})\) otherwise. For any pair \((p_1,p_2)\) we have
\[
\Pr(U=0) = (1-p_1)(1-p_2),\qquad
\Pr(U=1) = p_1(1-p_2) + (1-p_1)p_2,\qquad
\Pr(U=2) = p_1 p_2.
\]
We write \(\mathbf{R}^A = (R_{A0},R_{A1},R_{A2})\) for the distribution implied by investing in \(A\) and \(\mathbf{R}^B = (R_{B0},R_{B1},R_{B2})\) for the distribution implied by investing in \(B\), which correspond exactly to the PMF features provided to subjects in the treatment condition.
We consider several benchmark theories for how the decision maker maps the available information into a choice propensity.

\paragraph{Probability Gain Model}
A parsimonious baseline is a \emph{probability gain} model, in which the decision maker compares the immediate improvements \(\Delta P_A\) and \(\Delta P_B\) and invests in the option with the larger increment. This mechanism predicts that the choice probability
\begin{equation}
    \widehat{p}_{g,c} = \sigma\left((\Delta P_B-\Delta P_A)/T\right),
\label{eq:formulation-baseline-probability-gain}
\end{equation}
where $\sigma(\cdot)$ is the logistic function, $T$ is the temperature parameter.

\begin{remark}
        The probability gain model (\ref{eq:formulation-baseline-probability-gain}) with $T\to0$ maximizes expected value.
\end{remark}
% \begin{proof}
%     The expected number of success $\mathbb E[S] = p_1 + p_2$. Therefore, 
%     \[
%     \mathbb E[S\vert \text{choose }A] = p_A^{\text{ini}} + p_B^{\text{ini}} + \Delta P_A,\qquad
%     \mathbb E[S\vert \text{choose }B] = p_A^{\text{ini}} + p_B^{\text{ini}} + \Delta P_B
%     \]
% Therefore choosing the option with higher probability gain maximizes the expected value.
% \end{proof}

\paragraph{After-investment Probability Model}
A second class of models evaluates the probability of success after investment, treating the investment as a way to increase a single Bernoulli parameter. In this view, the decision maker compares \(P_A^{\mathrm{aft}}\) and \(P_B^{\mathrm{aft}}\) and chooses according to
\begin{equation}
    \widehat{p}_{g,c} = \sigma\left((P_B^{\mathrm{aft}}-P_A^{\mathrm{aft}})/T\right),
\label{eq:formulation-baseline-after-investment}
\end{equation}

\paragraph{Tail-Probability Models}
A natural alternative to expected-value related models is that decision makers attend to tail events of the total success count \(U\), particularly when the full outcome distribution is explicitly presented. The tail-probability model posits that choices are driven by a weighted tradeoff between the best-case probability \(\Pr(U=2)\) and the worst-case probability \(\Pr(U=0)\) under each investment.
\[
U_{\mathrm{tail}}(A)=\omega_0 (1-\Pr(U=0\mid A))+\omega_2 \Pr(U=2\mid A),\]\[
U_{\mathrm{tail}}(B)=\omega_0 (1-\Pr(U=0\mid B)) +\omega_2 \Pr(U=2\mid B),
\]
where \(\omega_2,\omega_0\in [0,1]\) capture the relative emphasis on the upside and downside tails. The choice propensity is then modeled by a logistic comparison of tail scores. For the model $\text{Tail}(\omega_0, \omega_2)$
\begin{equation}
    \widehat{p}_{g,c}=\sigma\!\left(\bigl(U_{\mathrm{tail}}(B)-U_{\mathrm{tail}}(A)\bigr)/T\right),
\end{equation}
where the score difference can be written directly in terms of the combinatorial-risk features
\[
U_{\mathrm{tail}}(B)-U_{\mathrm{tail}}(A)
=\omega_2\Bigl(P_A^{\mathrm{ini}}\Delta P_B - P_B^{\mathrm{ini}}\Delta P_A\Bigr)
-\omega_0\Bigl((1-P_A^{\mathrm{ini}})\Delta P_B - (1-P_B^{\mathrm{ini}})\Delta P_A\Bigr).
\]
% This form highlights how tail-based evaluation induces interactions between baseline probabilities and probability increments: the same increment can have markedly different effects depending on the current levels of \(P_A^{\mathrm{ini}}\) and \(P_B^{\mathrm{ini}}\), potentially favoring more balanced allocations when the upside tail is salient or prioritizing failure avoidance when the downside tail is salient.

\subsection{State-Space Utility Models}
\label{sec:state-space-benchmarks}

The preceding heuristic models define choice rules directly on salient features of the
combinatorial-risk problem, such as probability increments and after-investment success
probabilities. We next introduce a class of theory-grounded benchmarks that evaluate the post-investment probability state induced by each action, and characterize the risk attitude for combinatorial risk. 
Here we present the resulting decision models, while formal derivation is provided in Appendix~\ref{sec:risk-attitude}.

Let \(m=(p_A,p_B)\in[0,1]^2\)
denote a probability state. Investing in \(A\) or \(B\) leads respectively to
\[
m^A=(P_A^{\mathrm{ini}}+\Delta P_A,\; P_B^{\mathrm{ini}}),
\qquad
m^B=(P_A^{\mathrm{ini}},\; P_B^{\mathrm{ini}}+\Delta P_B).
\]
A state-space utility model assigns a value \(u(m)\) to each probability state and predicts choices
by comparing \(u(m^A)\) and \(u(m^B)\)
\begin{equation}
\widehat{p}_{g,c} =
    \sigma\left(
    ({u(m^B)-u(m^A)})/{T}
    \right),
\end{equation}
where \(T>0\) is a temperature parameter. 
The state-space utility directly evaluate the probability state instead of the induced PMF over total successes.
This leads to a multi-dimensional generalization of \citet{pratt1964risk} utility function. With a directional field $d(m)$ for the risk premium, we derive a risk-attitude index
\[
\mathcal{A}(m) = - \dfrac{H_u(m)}{\nabla u(m)^\top d(m)},
\]
where $H_u(m)$ is the Hessian matrix.
With this risk-attitude index, and assuming $d(m)=(1,1)$ to be indifferent between the dimensions, we proceed to define risk neutrality, constant risk averse, and decreasing risk averse models.

\paragraph{Risk Neutral Model}
Risk neutrality on the state space $\mathcal{A}(m)=0$ leads to the linear utility
\[
u_{\mathrm{RN}}(m)=a+\beta^\top m.
\]
In this case, the initial probabilities cancel out:
\(
u_{\mathrm{RN}}(m^B)-u_{\mathrm{RN}}(m^A)
=
\beta_2\Delta P_B-\beta_1\Delta P_A.
\)
Risk-neutral state-space utility therefore reduces to a
weighted comparison of probability increments.

\paragraph{Constant Risk Averse Model}
The constant-risk-aversion assumes $\mathcal{A}(m) = M$ is a constant. Under rank-one assumption $M = \frac{bb^\top}{b_1+b_2}$, where \(b=(b_1,b_2)\), we derive the decision model explicitly
\[
u_{\mathrm{CRA}}(m)
=
A\exp(-b^\top m)+\eta^\top m+D,
\]
for constants $A,D\in \mathbb R$ and $\eta \in \mathbb R^2$ with $\eta^\top(1,1)=0$.
Let \(
z^{\mathrm{ini}}=b_1P_A^{\mathrm{ini}}+b_2P_B^{\mathrm{ini}},
\)
we have
\[
u_{\mathrm{CRA}}(m^B)-u_{\mathrm{CRA}}(m^A)
=
A e^{-z^{\mathrm{ini}}}
\left(
e^{-b_2\Delta P_B}-e^{-b_1\Delta P_A}
\right)
-\eta_1(\Delta P_A+\Delta P_B).
\]
% This benchmark introduces interactions between initial probabilities and increments, because
% the marginal value of an increment depends on the current state \(z^{\mathrm{ini}}\).

\paragraph{Decreasing Risk Averse Model}
Finally, we consider decreasing-risk-aversion utility under the same rank-one assumption, where $\mathcal{A}(m) =\rho(b^\top m)\,\frac{bb^\top}{b_1+b_2},
$ for some decreasing function $\rho(\cdot)$. The utility can be written as
\[
u_{\mathrm{DRA}}(m)
=
F(b^\top m)+\eta^\top m+D,
\qquad
\eta^\top(1,1)=0,
\]
where the curvature of \(F\) determines how risk aversion varies \(-\frac{F''(z)}{F'(z)}=\rho(z)
\). This gives
\[
u_{\mathrm{DRA}}(m^B)-u_{\mathrm{DRA}}(m^A)
=
F(z^{\mathrm{ini}}+b_2\Delta P_B)
-
F(z^{\mathrm{ini}}+b_1\Delta P_A)
-\eta_1(\Delta P_A+\Delta P_B).
\]
We implement two parametric versions. The linear decreasing-risk-aversion model with
\[
\rho_{\mathrm{lin}}(z)=\rho_0-\rho_1 z,\qquad \rho_1\ge 0,
\]
and the exponential decreasing-risk-aversion model with
\[
\rho_{\mathrm{exp}}(z)=\rho_0 e^{-\gamma z},
\qquad
\rho_0,\gamma\ge 0.
\]
These models offer theory-grounded alternatives to simple heuristics. They characterize risk preferences over the state space of success probabilities. The resulting decision models exhibit interactions between initial success probabilities and probability increments. 
Unlike prospect-theoretic models, they do not require evaluation of the induced PMF, making them more behaviorally plausible.

\subsection{Risky Choice Models on Induced Lotteries}\label{sec:benchmarks-risky}
We also consider classical risky choice models applied on the induced lotteries.
Each model below maps a lottery \(\mathbf{R}\) to a scalar valuation \(V(\mathbf{R})\), and the choice propensity is a logistic comparison
\begin{equation}
  \widehat{p}_{g,c}=\sigma\!\left(\bigl(V(\mathbf{R}^B)-V(\mathbf{R}^A)\bigr)/T\right),
\label{eq:formulation-risky-choice-generic}
\end{equation}
where \(\sigma(\cdot)\) is the logistic function and \(T\) the temperature parameter.
Throughout we use the power value function \citep{tversky1992advances}
\[
  v(x;\alpha)=x^{\alpha},\qquad \alpha>0,
\]
and the Log Odds Linear weighting function \citep{gonzalez1999shape}
\[
\pi(p;\alpha_w,\beta_w)=\frac{\beta_w\,p^{\alpha_w}}{\beta_w\,p^{\alpha_w}+(1-p)^{\alpha_w}},
\qquad \alpha_w>0,\ \beta_w>0.
\]

\paragraph{Expected Utility (EU)}
The expected-utility model \citep{von1944theory} values each lottery by the probability-weighted sum of outcome utilities,
\begin{equation}
  V_{\mathrm{EU}}(\mathbf{R})=\sum_{u=0}^{2} R_u\, v(x_u;\alpha)
  = R_1 + R_2\,2^{\alpha}.
\label{eq:formulation-eu}
\end{equation}
The single curvature parameter \(\alpha\) encodes the agent's risk attitude over the induced success count, and the
choice propensity follows \eqref{eq:formulation-risky-choice-generic}.
\begin{remark}
  EU with \(\alpha=1\) reduces to the probability gain model \eqref{eq:formulation-baseline-probability-gain}.
\end{remark}

\paragraph{Prospect Theory (PT)}
Prospect theory \citep{Kahneman1979prospect} replaces objective probabilities with subjective decision weights by applying the weighting function
\begin{equation}
  V_{\mathrm{PT}}(\mathbf{R})=\sum_{u=0}^{2}\pi(R_u;\alpha_w,\beta_w)\, v(x_u;\alpha)
  = \pi(R_1;\alpha_w,\beta_w) + \pi(R_2;\alpha_w,\beta_w)\,2^{\alpha}.
\label{eq:formulation-pt}
\end{equation}
This separable transformation accommodates the overweighting of small probabilities and underweighting of large ones.
The choice propensity again follows \eqref{eq:formulation-risky-choice-generic}.

\paragraph{Cumulative Prospect Theory (CPT)}
Cumulative prospect theory \citep{tversky1992advances} resolves the violations of stochastic dominance that can arise
under separable weighting by applying the weighting function to cumulative probabilities.
Ordering the outcomes \(x_0<x_1<x_2\) in the gain domain, the rank-dependent decision weights are,
\[
\pi_2=\pi(R_2),\qquad
\pi_1=\pi(R_1+R_2)-\pi(R_2),\qquad
\pi_0=1-\pi(R_1+R_2),
\]
where \(\pi(\cdot)=\pi(\cdot;\alpha_w,\beta_w)\) and \(\pi_0+\pi_1+\pi_2=1\) by construction. The lottery valuation is
\begin{equation}
  V_{\mathrm{CPT}}(\mathbf{R})=\sum_{u=0}^{2}\pi_u\, v(x_u;\alpha)
  = \bigl(\pi(R_1+R_2)-\pi(R_2)\bigr) + \pi(R_2)\,2^{\alpha},
\label{eq:formulation-cpt}
\end{equation}
and the choice propensity follows \eqref{eq:formulation-risky-choice-generic}. 
% As all outcomes are non-negative, only the gain-domain weighting branch is active, so the loss-aversion and loss-curvature parameters of the full \citet{tversky1992advances} specification are inert.

\subsection{Benchmark Evaluation}

\begin{table}[htbp]
\centering
\caption{Model test performances on the control and treatment conditions.}
\label{tab:theory_performance}
\scriptsize
\begin{tabular}{rcccccc}
\toprule
\multirow{2}{*}{Model} & \multicolumn{3}{c}{Control} & \multicolumn{3}{c}{Treatment} \\
\cmidrule(r){2-4}\cmidrule(r){5-7}
 & CE$_\mathrm{test}$ $\downarrow$ & MSE$_\mathrm{test}$ $\downarrow$ & Acc$_\mathrm{test}$ $\uparrow$ & CE$_\mathrm{test}$ $\downarrow$ & MSE$_\mathrm{test}$ $\downarrow$ & Acc$_\mathrm{test}$ $\uparrow$ \\
\midrule
\multicolumn{7}{l}{\textit{Combinatorial risk - heuristic models}} \\
Delta-diff                    & 0.6411 & 0.0713 & 0.6707 & 0.6439 & 0.0405 & 0.7547 \\
Aft-prob                      & 0.6196 & 0.0590 & 0.8813 & 0.6703 & 0.0529 & 0.7227 \\
Tail(1,0)                     & 0.5948 & 0.0460 & \textbf{0.8920} & 0.6456 & 0.0405 & 0.8040 \\
Tail(0,1)                     & 0.6932 & 0.0965 & 0.2853 & 0.6932 & 0.0644 & 0.4440 \\
Tail($\omega_0$,$\omega_2$)   & 0.5844 & 0.0427 & 0.8760 & 0.6259 & \underline{0.0316} & \textbf{0.8240} \\
\midrule
\multicolumn{7}{l}{\textit{Combinatorial risk - state space utility models}} \\
Risk Neutral                          & 0.6413 & 0.0713 & 0.6707 & 0.6436 & 0.0403 & 0.7547 \\
Const. Risk Averse                   & 0.6365 & 0.0695 & 0.6707 & 0.6424 & 0.0399 & 0.7547 \\
Lin. Decr. Risk Averse       & 0.6160 & 0.0597 & 0.7520 & 0.6362 & 0.0367 & 0.7787 \\
Exp. Decr. Risk Averse  & 0.6366 & 0.0695 & 0.6707 & 0.6420 & 0.0397 & 0.7547 \\
\midrule
\multicolumn{7}{l}{\textit{Induced lottery - risky choice models}} \\
EU                            & 0.5843 & 0.0426 & 0.8760 & 0.6258 & \underline{0.0316} & \underline{0.8213} \\
PT                            & \textbf{0.5773} & \textbf{0.0401} & \underline{0.8893} & \underline{0.6256} & \underline{0.0316} & \underline{0.8213} \\
CPT                           & \underline{0.5781} & \underline{0.0412} & 0.8867 & \textbf{0.6252} & \textbf{0.0315} & \textbf{0.8240} \\
\bottomrule
\end{tabular}
\end{table}

We partition the dataset into a training and a test set using an 80/20 split.
We fit free parameters on the training set by minimizing MSE between predicted and empirical choice propensities $\widehat p_{g,c}$ across problems. We then report cross-entropy (CE), mean-squared error (MSE) and accuracy (Acc.) on the test set, separately for control and treatment conditions (see \Cref{tab:theory_performance}).

Simple heuristics such as the probability-gain model and the after-investment probability model are insufficient. Tail-based models improve prediction, with downside emphasis $\text{Tail}(1,0)$ outperforming upside $\text{Tail}(0,1)$, and the flexible $\text{Tail}(\omega_0,\omega_2)$ performing best (with $\omega_0=0.74,\, \omega_2=0.30$) among the heuristics, consistent with stronger sensitivity to failure risk than to best-case outcomes.

State-space utility models capture part of this behavior but not all of it. Relative to risk neutrality, allowing for risk aversion improves fit, especially when risk aversion decreases with the initial state.
This suggests that subjects' behavior is shaped not just by aversion to risk per se, but by how that aversion varies with the initial success probability.

Prospect-theoretic models on the induced lotteries provide the best overall fits.
In treatment, CPT performs the best.
In control, PT performs the best, while CPT remains close.
Overall, the results indicate that choices depend on both initial success probabilities and increments, exhibit tail-risk sensitivity, and PT/CPT are competitive benchmarks for subsequent analyses.

However, the overall prediction performances of the existing theories are still dissatisfactory. Best-performing prospect-theoretic models are not behaviorally viable especially for the control group, where the calculation of the PMF is intractable for humans.
\section{Model Discovery via Symbolic Regression}\label{sec:experiments}
\subsection{Hybrid Symbolic Regression with Ontology-Guided Exploration}\label{sec:method}

\begin{figure}[htbp]
    \centering
    \includegraphics[width=0.5\linewidth]{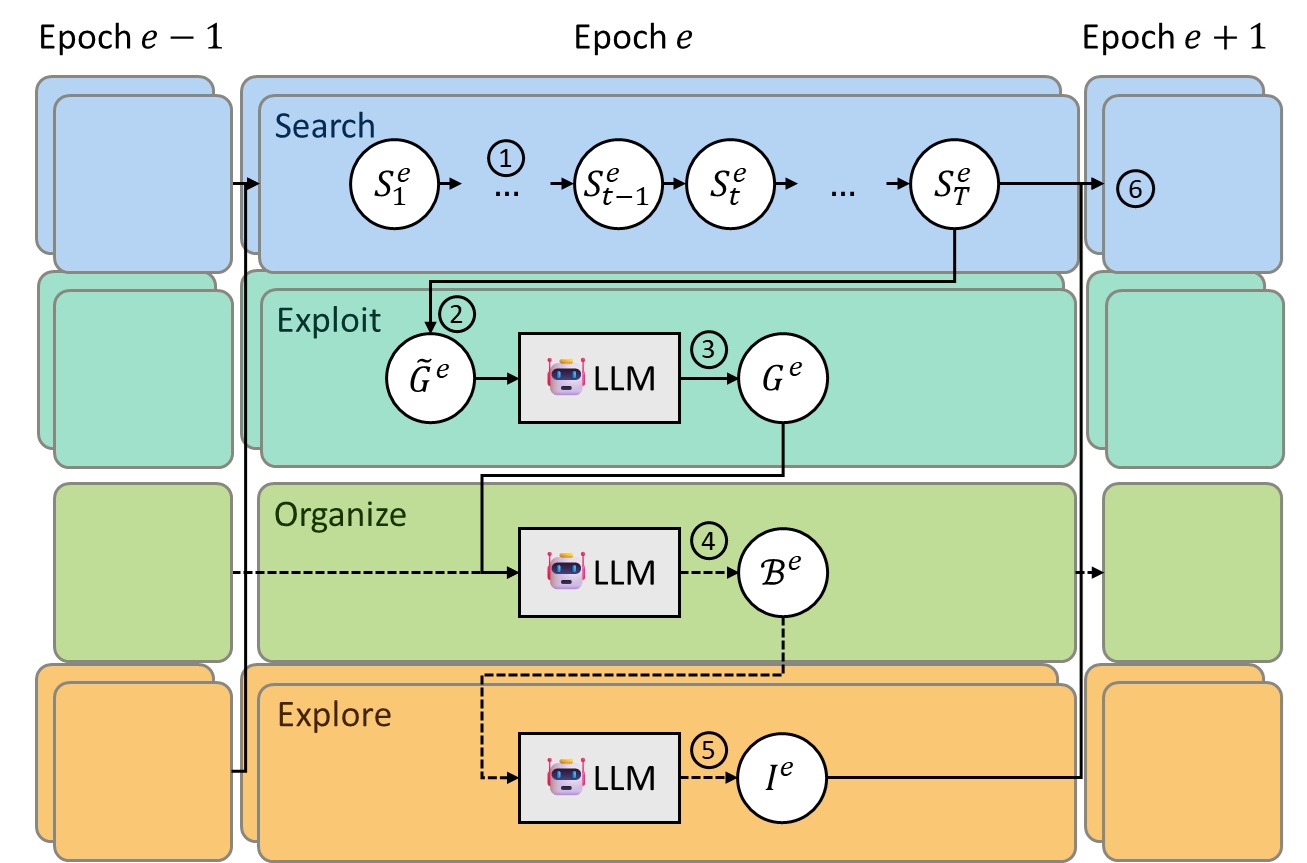}
    \caption{Overview of the proposed method. Each dataset is treated as an island. In every epoch, $\textcircled{1}$ each island independently searches for $T$ generations, $\textcircled{2}$ the Pareto-optimal candidates are extracted as elites and $\textcircled{3}$ passed to an LLM for local refinement (Exploit). $\textcircled{4}$ All Pareto models are parsed into a shared ontology and periodically organized by an LLM into theory-grounded concepts and categories. $\textcircled{5}$ A per-island Explorer queries the ontology for underexplored regions and calls an LLM to synthesize new candidate expressions. Finally, $\textcircled{6}$ synthesized expressions are cross-routed to all schema-compatible islands, enabling knowledge transfer across conditions.}
    \label{fig:method-overview}
\end{figure}

The proposed framework combines evolutionary symbolic regression with LLM-based generation in an epoch-based island architecture (\Cref{fig:method-overview}). 
Expressions are represented as binary trees whose nodes correspond to operators, variables, or constants. Each node has a type (Scalar or Vector) to allow for vector-valued variables such as probabilities and outcomes. Unlike general-purpose symbolic regression methods that treat inputs as flat feature vectors, this typed representation retains the original input structure throughout the search process.

Each epoch begins with a {search} phase, in which NSGA-II with constant optimization is run for $T$ generations to improve predictive fit and expression simplicity. Separate populations of candidate expressions evolve in parallel across distinct experimental conditions (“islands”), allowing the search to adapt to condition-specific structure.
Pareto-optimal candidates are then passed to an {exploit} phase, where an LLM performs targeted local revisions intended to refine promising models while preserving interpretability.

At the end of each epoch, Pareto-frontier models are {parsed} into a shared ontology and cross-evaluated on all islands. The ontology is then {organized} through clustering and LLM-based analysis, which names recurring functional forms, identifies higher-level behavioral concepts, and groups related models into theory-relevant categories. Guided by this representation, a per-island {explorer} queries the ontology for promising but underexplored concepts and prompts the LLM to generate new candidate expressions. These candidates are then cross-routed to all compatible islands, enabling discoveries in one condition to seed exploration in others. This creates an iterative search–exploit–organize–explore loop that aims not only to improve predictive performance, but also to accumulate interpretable and reusable knowledge about the structure of decision behavior.
Details of the methodology are presented in the Appendix \Cref{sec:append-method}. 

A validation result is presented in the Appendix \Cref{sec:append-experiment-13k}. Running the framework on the {\tt Choices13k} dataset \citep{peterson2021using} successfully re-discovered classical theories such as prospect theory. 
Ablation study confirms that introduced Exploitation and Exploration steps effectively improve the performance.
It also confirms that utilizing structured input facilitates model discovery by reducing the dimension of the search space.
%This process rigorously enforces feature-space constraints: unrestricted migration is permitted symmetrically between identical environments and asymmetrically from sparse Control to feature-rich Treatment datasets. Conversely, models migrating from Treatment to Control are strictly filtered to exclude unavailable probability features. 

% Schema compatibility governs which transfers are valid: a model may only be routed to islands whose feature sets are a superset of the features the model references. This permits unrestricted transfer between conditions sharing identical schemas, while preventing models that reference unavailable features from entering incompatible populations.

\subsection{Discovered Descriptive models}\label{sec:experiment-discovered-models}

\begin{figure}[htbp]
    \centering
    \includegraphics[width=0.4\linewidth]{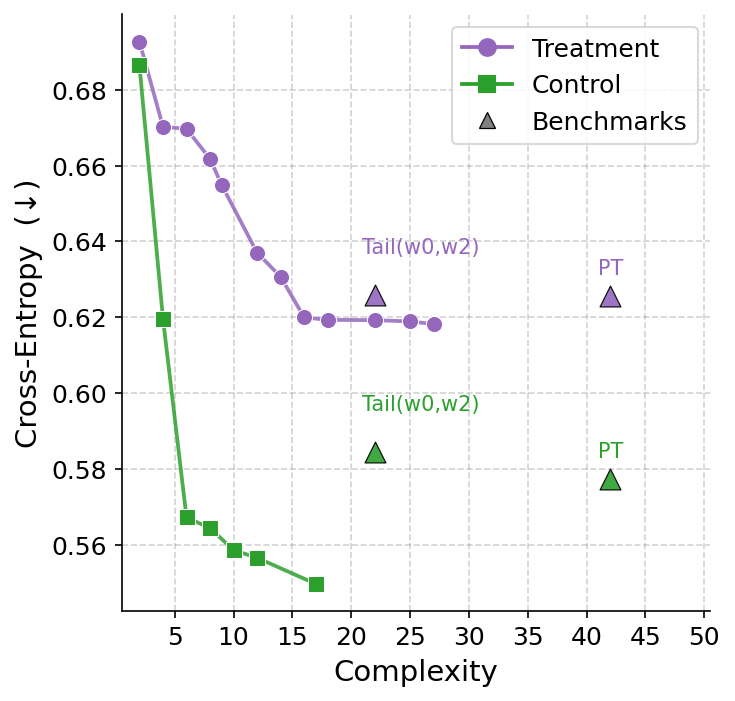}
    \caption{Pareto frontier of models on the {\tt Combinatorial Risk} testset data.}
    \label{fig:exp-comb-pareto}
\end{figure}

Symbolic regression successfully identified a rich set of descriptive models (see \Cref{fig:exp-comb-pareto}).
The SR results confirm that some ingredients of the benchmark models are genuinely important.
At the same time, the symbolic models show that these ingredients can be assembled into hybrid models that make better predictions than those hand-crafted ones.

In both conditions, the difference in after-investment success probability, $P_A^{\mathrm{aft}}-P_B^{\mathrm{aft}}$, repeatedly appears in Pareto optimal models, including as a low-complexity model on its own.
This is consistent with the exploratory finding that when probability increments are equal, participants tend to prefer the option with the higher after-investment success probability.
Likewise, probability increment $\Delta P_A-\Delta P_B$ also appears in many of the Pareto-optimal models.
In that sense, SR validates the behavioral findings in \Cref{sec:formulation-behavioral-finding}.

Meanwhile, SR also discovers new descriptive models that involve instance-dependent transformation.
The Pareto frontier models discovered by SR dominate the benchmark models, achieving better predictive performance with simpler expressions (see \Cref{fig:exp-comb-pareto}).
The SR frontier shows that much of the behavior can be captured without committing to full prospect-theoretic evaluation of the induced lottery.
In the following part of this section, we present and analyze the discovered models on the Pareto frontiers for both conditions.

\begin{table}[ht]
\centering
\caption{Pareto-optimal models for the control condition of the combinatorial-risk experiment. CE$_\mathrm{test}$ = cross-entropy; MSE$_\mathrm{test}$ = mean squared error; Acc$_\mathrm{test}$ = accuracy (test set).}
\label{tab:pareto_notable_combRisk_C}
\scriptsize
\setlength{\tabcolsep}{4pt}
\begin{tabular}{rclll p{6cm}}
\toprule
Model & Complexity & CE$_\mathrm{test}\,\downarrow$ & MSE$_\mathrm{test}\,\downarrow$ & Acc$_\mathrm{test}\,\uparrow$ & Expression \\
\midrule
Ini-A   & 2 & 0.6865 & 0.0930 & 0.4653 & $P_A^{\mathrm{ini}}$ \\
Aft-prob   & 4 & 0.6196 & 0.0590 & 0.8680 & $P_A^{\mathrm{aft}} - P_B^{\mathrm{aft}}$ \\
Ini-scaled Aft-prob   & 6 & {0.5672}$^\dagger$ & 0.0364 & 0.8680 & $\mathrm{sgp}(P_A^{\mathrm{aft}} - P_B^{\mathrm{aft}}, P_A^{\mathrm{ini}})$ \\
Ini-scaled Aft-prob   & 8 & {0.5643}$^\dagger$ & 0.0353 & 0.8680 & $\mathrm{sgp}(P_A^{\mathrm{aft}} - P_B^{\mathrm{aft}}, \mathrm{sgp}(P_A^{\mathrm{ini}}, c_{1}))$ \\
Hybrid   & 10 & {0.5585}$^\dagger$ & 0.0330 & \textbf{0.8867} & $\Delta P_A - \Delta P_B + \mathrm{sgp}(P_A^{\mathrm{aft}} - P_B^{\mathrm{aft}}, P_A^{\mathrm{ini}})$ \\
Hybrid   & 12 & \underline{0.5565}$^\dagger$ & \underline{0.0323} & \underline{0.8840} & $\Delta P_A - \Delta P_B + \mathrm{sgp}(P_A^{\mathrm{aft}} - P_B^{\mathrm{aft}}, \mathrm{sgp}(P_A^{\mathrm{ini}}, c_{1}))$ \\
Hybrid  & 17 & \textbf{0.5496}$^\dagger$ & \textbf{0.0297} & 0.8680 & $c_{1} \cdot (\mathrm{dot}(\mathbf{o}_A, \mathbf{R}_A)- \mathrm{dot}(\mathbf{o}_B, \mathbf{R}_B))$
  \newline\hspace*{1em} $- c_{2} \cdot \mathrm{sgp}(-P_A^{\mathrm{aft}} + P_B^{\mathrm{aft}}, c_{3})$ \\
\bottomrule
\end{tabular}
\vspace{2pt}
\parbox{0.95\linewidth}{
\footnotesize
$^\dagger$ CE$_\mathrm{test}$ is lower than the baseline CE.
}
\end{table}

For the control condition (see \Cref{tab:pareto_notable_combRisk_C}), Pareto-optimal models do not rely on the explicit PMF except the most complex hybrid model.
This implies that human decisions are not sensitive to exact expected value for the decision under combinatorial risk.
After-investment difference $P_A^{\mathrm{aft}} - P_B^{\mathrm{aft}}$ appears both as a simple model (Complexity = 4) and as a component in many of the models, transformed with a sensitivity parameter.
Interestingly, SR discovered models (Complexity = 6 \& 8) with instance-dependent transformation. 
They utilize the after-investment difference $P_A^{\mathrm{aft}} - P_B^{\mathrm{aft}}$ as the core quantity that determines the preference direction, and use initial success probability $P_A^\textrm{ini}$ for power transformation.
The model with lower CE (Complexity = 10 \& 12) further introduces difference in probability increments $\Delta P_A - \Delta P_B$.
The most complex hybrid model (Complexity = 17) introduces the expected payoff $\mathrm{dot}(\mathbf{o}_A, \mathbf{R}_A)- \mathrm{dot}(\mathbf{o}_B, \mathbf{R}_B)$, which takes both expected number of success and the payoff magnitude into account.
However, it does not capture the interaction between the magnitude effect and the after-investment probability observed in \Cref{sec:formulation-behavior-magnitude}.

\begin{table}[ht]
\centering
\caption{Pareto-optimal models for the treatment condition of the combinatorial-risk experiment. CE$_\mathrm{test}$ = cross-entropy; MSE$_\mathrm{test}$ = mean squared error; Acc$_\mathrm{test}$ = accuracy (test set).}
\label{tab:pareto_notable_combRisk_T}
\scriptsize
\setlength{\tabcolsep}{4pt}
\begin{tabular}{rclll p{7cm}}
\toprule
Model & Complexity & CE$_\mathrm{test}\,\downarrow$ & MSE$_\mathrm{test}\,\downarrow$ & Acc$_\mathrm{test}\,\uparrow$ & Expression \\
\midrule
 Constant  & 2 & 0.6928 & 0.0642 & 0.4600 & $c_{1}$ \\
Aft-prob  & 4 & 0.6703 & 0.0529 & 0.7267 & $P_A^{\mathrm{aft}} - P_B^{\mathrm{aft}}$ \\
Aft-prob    & 6 & 0.6697 & 0.0527 & 0.7000 & $P_A^{\mathrm{aft}} - P_B^{\mathrm{aft}} + c_{1}$ \\
Hybrid   & 8 & 0.6618 & 0.0488 & 0.7600 & $P_A^{\mathrm{aft}}- P_B^{\mathrm{aft}} + \Delta P_A - \Delta P_B$ \\
Log-EV   & 9 & 0.6549 & 0.0433 & 0.6987 & $\ln(\mathrm{dot}(\mathbf{R}_A, \mathbf{o}_A)/\mathrm{dot}(\mathbf{R}_B, \mathbf{o}_B))$ \\
Power-EV   & 12 & {0.6370} & 0.0366 & 0.7093 & $P_A^{\mathrm{ini}} - \mathrm{sgp}(-\mathrm{dot}(\mathbf{o}_A, \mathbf{R}_A) + \mathrm{dot}(\mathbf{o}_B, \mathbf{R}_B), c_{1})$ \\
Power-EV   & 14 & {0.6307} & 0.0341 & 0.7773 & $P_A^{\mathrm{ini}} - \mathrm{sgp}(\mathrm{dot}(\mathbf{o}_B, \mathbf{R}_B) - \mathrm{dot}(\mathbf{o}_A, \mathbf{R}_A), c_{1}) - c_{2}$ \\
Hybrid   & 16 & \beatbase{0.6199} & 0.0291 & \textbf{0.8267} & $P_A^{\mathrm{aft}} - P_B^{\mathrm{aft}} - c_{1} \cdot \mathrm{sgp}(-\mathrm{dot}(\mathbf{o}_A, \mathbf{R}_A) + \mathrm{dot}(\mathbf{o}_B, \mathbf{R}_B), c_{2})$ \\
Hybrid   & 18 & \beatbase{0.6193} & 0.0288 & 0.8133 & $P_A^{\mathrm{aft}} - P_B^{\mathrm{aft}} - c_{1} \cdot \mathrm{sgp}(-\mathrm{dot}(\mathbf{o}_A, \mathbf{R}_A) + \mathrm{dot}(\mathbf{o}_B, \mathbf{R}_B), c_{2}) + c_{3}$ \\
Hybrid   & 22 & \beatbase{0.6192} & 0.0288 & 0.8120 & $P_A^{\mathrm{aft}} - P_B^{\mathrm{aft}} + \Delta P_A - \Delta P_B $\newline\hspace*{1em} $- c_{1} \cdot \mathrm{sgp}(-\mathrm{dot}(\mathbf{o}_A, \mathbf{R}_A) + \mathrm{dot}(\mathbf{o}_B, \mathbf{R}_B), c_{2}) + c_{3}$ \\
Hybrid   & 25 & \underline{\beatbase{0.6189}} & \underline{0.0287} & \underline{0.8213} & $P_A^{\mathrm{aft}} - P_B^{\mathrm{aft}} - c_{1} \cdot \mathrm{sgp}(-\mathrm{dot}(\mathbf{o}_A, \mathbf{R}_A) + \mathrm{dot}(\mathbf{o}_B, \mathbf{R}_B), c_{2})$\newline\hspace*{1em} $+ \ln(\mathrm{dot}(\mathbf{o}_A, \mathbf{R}_A)/\mathrm{dot}(\mathbf{o}_B, \mathbf{R}_B))$ \\
Hybrid   & 27 & \textbf{\beatbase{0.6182}} & \textbf{0.0283} & 0.8160 & $P_A^{\mathrm{aft}} - P_B^{\mathrm{aft}} - c_{1} \cdot \mathrm{sgp}(-\mathrm{dot}(\mathbf{o}_A, \mathbf{R}_A) + \mathrm{dot}(\mathbf{o}_B, \mathbf{R}_B), c_{2})$\newline\hspace*{1em} $+ \ln(\mathrm{dot}(\mathbf{o}_A, \mathbf{R}_A)/\mathrm{dot}(\mathbf{o}_B, \mathbf{R}_B)) + c_{3}$ \\
\bottomrule
\end{tabular}

\vspace{2pt}
\parbox{0.95\linewidth}{
\footnotesize
$^\dagger$ CE$_\mathrm{test}$ is lower than the baseline CE.
}

\end{table}

\begin{figure}[htbp]
    \centering
    \includegraphics[width=0.4\linewidth]{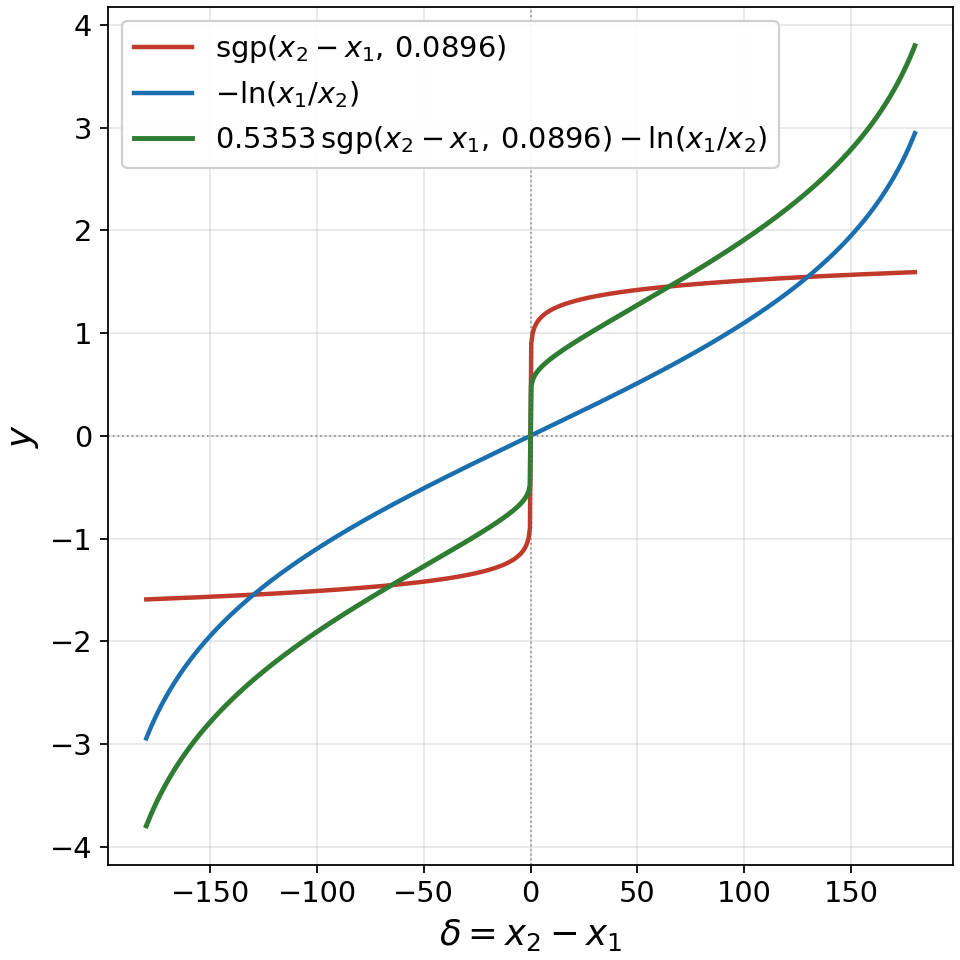}
    \caption{Expected payoff difference transformations discovered from the treatment group data.}
    \label{fig:exp-comb-function-payoff}
\end{figure}

For the treatment condition (See \Cref{tab:pareto_notable_combRisk_T}), the most predictive models combine combinatorial risk features with the induced lottery features to make predictions.
Models with high predictive performance (Complexity $\ge$ 12) all involve the expected payoff term $\mathrm{dot}(\mathbf{o}_A, \mathbf{R}_A)- \mathrm{dot}(\mathbf{o}_B, \mathbf{R}_B)$, suggesting attention is attracted from the combinatorial risk features to the induced lottery features (PMFs).
The expected payoff appear in two forms: $\log$-transformation (e.g., Complexity = 9) and power transformation (e.g., Complexity = 12 \& 14).
These nonlinear transformations are visualized in \Cref{fig:exp-comb-function-payoff}, with numerical optimization performed on the combinatorial risk training set data.
The most predictive models (Complexity = 25 \& 27) utilize both forms, suggesting a non-trivial relation between choice propensity and the expected payoff of the induced lottery.
Notice that the $\log$-transformation is invariant to the change in payoff magnitude.
The magnitude effect manifests through the power-transformed expected payoff difference, and the functional form discovered suggests a diminishing sensitivity to magnitude increase.
Meanwhile, the after-investment difference $P_A^{\mathrm{aft}} - P_B^{\mathrm{aft}}$ term remains in most of the Pareto-optimal models.

%%%%%%%%%%%%%%%%%%%%%%%%

\section{Residual Analysis for the Treatment Effect}\label{sec:residual-analysis}

Providing the induced lottery PMF changes behavior. 
\Cref{sec:experiment-discovered-models} shows that discovered models for the treatment group exhibit higher reliance on the expected payoff term, suggesting that treatment participants use the additional distributional information to make choices. 
To further investigate the treatment effect, we analyze the residuals of the best-performing control model without the expected-payoff term (Complexity = 12) when it is transferred to the treatment data.
Let \(\widehat P_C(x)\) denote the prediction of the control model. 
For treatment problem \(i\), we define the residual
\[
r_i = y_i - \widehat P_C(x_i),
\]
where \(y_i\) is the observed probability of choosing \(B\). 
Because \(\widehat P_C\) uses only combinatorial risk features 
\((P^{\mathrm{ini}}, P^{\mathrm{del}}, P^{\mathrm{aft}})\), any systematic structure in \(r_i\) reflects variation in treatment choices that is not explained by the behavioral rule learned from the control condition. 
In particular, correlations between \(r_i\) and PMF-derived quantities identify how the displayed distributional information shifts choices relative to the control model.
% \begin{figure}[htbp]
%     \centering
%     \includegraphics[width=0.4\linewidth]{figs/exp-residual-analysis.png}
%     \caption{Spearman correlations between induced PMF features and residuals from the Control model transferred to the Treatment condition.}
%     \label{fig:exp-residual-analysis}
% \end{figure}
The residuals show a selective relationship with the PMF features. % (see \Cref{fig:exp-residual-analysis}). 
They are negatively correlated with \(R_{A2}\) 
\((\rho=-0.221,\,p<0.001)\) and \(R_{B1}\) 
\((\rho=-0.099,\,p=0.006)\), while positively correlated with \(R_{B2}\) 
\((\rho=0.125,\,p<0.001)\) and \(R_{A1}\) 
\((\rho=0.115,\,p=0.002)\). 
Intuitively, participants in the treatment group prefer option $B$ more if investing in B leads to higher probability of two successes. To gain a deeper insight beyond correlations, we fit residual models to explain the behavioral differences.

\paragraph{Residual Models}
The residual models take as input only the PMF shown to treatment participants, including three outcome levels
$(o_{j0},\,o_{j1},\,o_{j2})$ and their associated probabilities
$(R_{j0},\,R_{j1},\,R_{j2})$ for $j\in\{A,B\}$, together with the control model's prediction $\hat{P}_C(x)$.
Each option's prospect-theory value is
$V(j)=\sum_k w(R_{jk};\gamma)\,v(o_{jk};\alpha_v)$, where the value function and the probability weighting function follows the form in \Cref{sec:benchmarks-risky}.
The choice probability is then
\[
\hat{P}(B\mid x)=\sigma\!\bigl(\alpha\,\mathrm{logit}\,
\hat{P}_C(x)+\beta_0+\beta_1\,\Delta V\bigr)
\]
with $\Delta V=V(B)-V(A)$, and all five parameters
$(\alpha,\beta_0,\beta_1,\alpha_v,\gamma)$ are fit jointly by
maximum likelihood on the training set from the treatment group.
We compare two variants that differ only in what the value function sees: {C+PT\,(w/)} applies $v(\cdot)$ to outcomes in their monetary units (e.g.\
$\$0,\$30,\$60$), so the fitted $\alpha_v$ must simultaneously
capture diminishing sensitivity and the outcome scale, whereas
{C+PT\,(w/o)} first normalizes outcomes by the payoff magnitude
to $\{0,1,2\}$.

\paragraph{Evaluation Method}
We compare the augmentation of residual models to the original SR-discovered models (C \& T) and a PT-only benchmark.
We use the same set of evaluation metrics (CE, MSE, and Acc) for comparison.
To quantify the uncertainty of the model evaluation, we employ bootstrapping by generating $B = 1000$ bootstrap replicates of the test set by resampling with replacement. For each bootstrap sample $b \in \{1, \dots, B\}$, we recompute the evaluation metrics for each model variant. The standard error ($\mathrm{SE}$) for each metric is then estimated as the empirical standard deviation of the bootstrap distribution
\begin{equation}
\mathrm{SE}(\theta) = \sqrt{\frac{1}{B-1} \sum_{b=1}^{B} \left(\hat{\theta}^*_b - \bar{\theta}^*\right)^2}
\end{equation}
where $\hat{\theta}^*_b$ represents the metric value computed on the $b$-th bootstrap sample, and $\bar{\theta}^*$ is the sample mean of the bootstrap estimates \(\bar{\theta}^* = \frac{1}{B} \sum_{b=1}^{B} \hat{\theta}^*_b\).

\paragraph{Results}

\begin{figure}[htbp]
    \centering
    \includegraphics[width=1\linewidth]{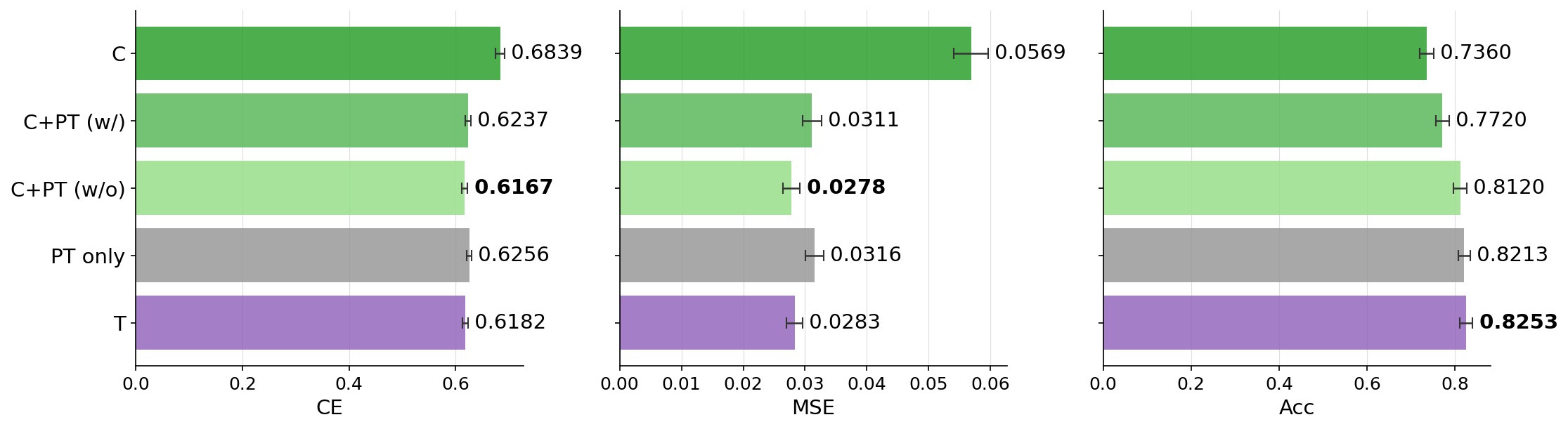}
    \caption{Comparison of control model, residual model augmentation, PT-only benchmark, and treatment model on the treatment group data, with bootstrapped error bars ($\pm$ 1 SE).}
    \label{fig:exp-residual-model}
\end{figure}

Examining the performance metrics shows that PT residual models effectively augment control model (see \Cref{fig:exp-residual-model}), suggesting behavioral differences can be explained by the incorporation of PMFs for decision making.
The original control model transfers poorly to the treatment data, with CE \(=0.6839\) and MSE \(=0.0569\). 
Augmentation control model with a PT residual model boosts performance to be comparable to the best-performing treament model (T).
Notably, the PT residual model without payoff magnitude (C+PT (w/o)) achieves the best CE \(=0.6167\) and MSE \(=0.0278\), significantly outperfroming the control model (C).
It also beats the prospect theory benchmark (PT only).
This suggests the behavioral shift when the induced lottery is revealed can be effectively explained by incorporating prospect-theoretic evaluation of induced lotteries with the control model that utilizes combinatorial risk features.
\section{Discussion}\label{sec:discussion}

This paper studies decision making under \emph{combinatorial risk} that manifests in many real-world decision problems.
The challenge for descriptive modeling is that exact evaluation of the induced lottery can be cognitively demanding.
In our investment-allocation paradigm, subjects did not behave as if they were explicitly optimizing the full induced PMF.
Instead, choices were shaped by a set of psychologically plausible quantities, most notably the after-investment success probabilities and the difference in probability increments.
In particular, subjects tended to favor the dominant option when one option offered a larger probability increment, and when increments were equal they favored the option with the higher initial/after-investment success probability.
An important finding is that providing more distributional information substantially changes behavior, but does not necessarily help decision makers act more advantageously.
Participants in the treatment condition were less responsive to the combinatorial-risk features than those in control, and exhibited compressed variation in choice propensities.

The discovered descriptive models sharpen the behavioral interpretation. 
First, for the control condition, the discovered models consistently revolve around the difference in after-investment success probability, sometimes modulated by baseline probability.
Notably, exact expected payoff appears only in the most complex control models and with weak influence, suggesting that subjects are not tracking expected payoff directly.
These simple models also make quantitatively better predictions of the decisions than hand-crafted benchmark models. 
Second, for the treatment condition, symbolic regression identifies useful models that combine combinatorial-risk features with nonlinear transformations of expected payoff, suggesting attention is attracted from the combinatorial risk features to the induced lottery features.
Residual analysis confirms the hypothesis, as augmentation with a PT residual model effectively predicts treatment group behavior.

Methodologically, the paper also contributes a framework for descriptive model discovery. The ontology-guided hybrid search improves search efficiency and the final model quality by organizing reusable concepts, functional forms, and categories, allowing the knowledge to accumulate over time.
More broadly, this work illustrates where the SR framework becomes scientifically useful. 
SR not only serves as a flexible search procedure that can reproduce human-discovered theories, but also inherits theoretically meaningful structure from prior theories and reorganizes into new composites.
This facilitates theory discovery in new experiment settings, where standard theories provide useful benchmarks but do not fully explain how people adapt their decisions to reward structure and computational complexity.

\subsection*{Limitations and Future work}
Several limitations should be noted.
First, although the current task aims to take a step from lottery problems to more realistic settings, it is still simple with two Bernoulli components and a single indivisible investment.
This simplicity is a strength for isolating combinatorial-risk mechanisms, but broader generalization remains to be established.
Second, the data are aggregate choice proportions rather than individual-level repeated decisions, so the discovered models describe average behavior and may mask heterogeneity in strategy use. 
Third, current experiments are constrained by computational cost, so only a limited number of expressions have been explored by the symbolic regression. The resulting Pareto frontier enjoys simplicity, but it is reasonable to expect more predictive models are to be discovered.
For example, a model with higher complexity might explain the observed interaction between initial success probability and the magnitude effect.
Finally, although symbolic regression yields compact and interpretable rules, it does not establish process-level validity; some discovered expressions may be excellent approximations to behavior without corresponding exactly to the internal computations subjects perform.

These limitations point naturally to future work. It would be valuable to test richer combinatorial-risk settings with more than two components, unequal rewards, or repeated decisions.
Besides, the framework could be extended to individual-level symbolic models or mixture formulations that capture heterogeneous strategies.
Interestingly, the present results suggest that combinatorial risk is a productive domain for joint progress in behavioral theory and interpretable machine learning: it is rich enough to generate nontrivial behavioral patterns, yet structured enough for symbolic model discovery to recover reusable and psychologically meaningful regularities.
Future work may also seek to establish descriptive models with better prediction power and process-level validity, aiming for a deeper understanding of human strategies under combinatorial risk.

\section{Conclusion}\label{sec:conclusion}
This paper introduces combinatorial-risk and studies decision-making via an investment-allocation task. 
Experiment shows that people responded systematically to combinatorial risk features, and revealing the PMFs alters behavior.
To model these patterns, we employ an ontology-guided symbolic regression framework that pushes beyond hand-crafted benchmarks.
The discovered models achieve better predictions, and suggest that human decisions under combinatorial risk are guided by salient problem features rather than the evaluation of the induced distribution.

\bibliographystyle{apalike}
\bibliography{sample}

\newpage
\begin{appendices}
\section{Method}\label{sec:append-method}

\begin{algorithm}[htbp]
\caption{Hybrid Symbolic Regression with Ontology-guided Exploration}
\label{alg:method}
\small
\begin{algorithmic}[1]
\Require Initial Population $S^0$, Search Rounds $T$, Max Epochs $E$
\Ensure Final Ontology $\mathcal{B}^E$

\State $S_{0}^0 \gets S^0$; \quad $\mathcal{B}^0 \gets \emptyset$
\For{$e = 1$ to $E$}
    % \State \Comment{\textbf{Phase 1: Search}}
    \For{$t = 1$ to $T$}
        \State $S_t^e \gets \textsc{Search}(S_{t-1}^e)$
    \EndFor
    % \State \Comment{\textbf{Phase 2: Exploit}}
    \State $\widetilde{G}^e \gets \textsc{ParetoElites}(S_T^e)$
    \State $G^e \gets \textsc{Exploit}(\widetilde{G}^e)$; \quad $S_T^e \gets S_T^e \cup G^e$
    % \State \Comment{\textbf{Phase 3: Parse \& Organize}}
    \State $\mathcal{B}^e \gets \textsc{Organize}(\textsc{ParetoElites}(S_T^e),\; \mathcal{B}^{e-1})$
    % \State \Comment{\textbf{Phase 4: Explore \& Cross-Route}}
    \State $I^e \gets \textsc{Explore}(\mathcal{B}^e)$
    \State Inject $I^e$ into origin island; cross-route compatible models to other islands
    \State $S_0^{e+1} \gets S_T^e$
\EndFor

\State \Return $\mathcal{B}^E$
\end{algorithmic}
\end{algorithm}

\paragraph{Expression Tree Representation}
%The operator alphabet includes binary operators $\{+,\,-,\,\times,\,\div,\,\mathtt{dot},\,\mathtt{signed\_pow}\}$ and optionally unary operators $\{\exp,\log,|\cdot|,\mathrm{neg},\mathrm{sum}\}$.

We employ a tree-based representation for symbolic decision models.
Each candidate is represented as a typed expression tree whose internal nodes are operators and whose leaf nodes are input features or constants. 
A central design choice is the treatment of vector-valued features.
Each gamble is naturally described by a discrete probability distribution with vectors of outcomes and probabilities. Normatively grounded decision models operate directly on these distributions as atomic objects. 
Flattening these vectors into individual scalar features $x_1, x_2, \ldots, x_K$,
as is standard in general-purpose symbolic regression, discards the data structure and leads to significantly larger search space and less meaningful expressions.
To accommodate this structure, each node is annotated with a type, \textsc{scalar} or \textsc{vector}, and tree generation is performed with explicit shape constraints propagated from parent to child. 
Shape compatibility is enforced by construction during subtree generation, crossover, and mutation.
This design allows the search to natively express vector-level operations such as dot product $\mathtt{dot}(\mathbf{p},\, u(\mathbf{x}))$, while guaranteeing that every candidate expression produces a scalar prediction at its root.

The features are specified by a \emph{feature schema}, a dataset-specific declaration that enumerates the named
features of each alternative and their types (scalar or vector).%, and constraints (e.g., that the probability vector sums to one) for expression simplification.
For example, the schema for the {\tt choices13k} dataset includes two vector features per alternative:
\texttt{outcomes} and \texttt{probs}.

\paragraph{\texttt{Search}}

Model search is cast as a bi-objective optimization problem. The first objective is
predictive fit, measured by the log-likelihood under a softmax decision rule (maximized).
The second objective is expression complexity, measured by the number of nodes in the expression tree (minimized). 
The two objectives are optimized jointly using the Non-dominated Sorting Genetic Algorithm~II (NSGA-II) \citep{deb2002fast}.
In contrast to standard GP approaches that combine fit and complexity into a single penalized objective \citep{cranmer2023interpretable}, Pareto-based search avoids the need to specify \emph{a priori} the exchange rate between these competing criteria \citep{kommenda2015complexity}. This is especially desirable here because expression complexity is not merely a regularizer, but also a scientifically meaningful criterion tied to interpretability and theoretical simplicity. 
At each generation, offspring are produced via tournament selection followed by subtree crossover and mutation. Tournament selection draws a random subset of individuals and returns the winner according to a lexicographic criterion: lowest non-domination rank first (rank 0 being the Pareto-optimal front), then largest crowding distance, an estimate of local solution density in the objective space that rewards diversity.
Subtree crossover selects a random subtree in each of two parents and swaps them, subject to the constraint that the swapped subtrees share the same output shape. 
Each selected offspring is then subjected to exactly one mutation operator, chosen uniformly at random: \emph{subtree mutation} replaces a randomly selected child subtree with a freshly generated tree of matching output shape; \emph{node mutation} replaces an operator with a structurally compatible alternative (preserving input and output shapes); and \emph{constant perturbation} adds zero-mean Gaussian noise to all learnable scalar constants in the tree.
Finally, the constants of the returned model are fine-tuned via L-BFGS-B to maximize log-likelihood with the expression structure held fixed.

\paragraph{\texttt{Exploit}}
The Exploit step refines expressions locally to improve both predictive accuracy and interpretability.
At each epoch, the top-$K$ elites are selected from the Pareto frontier by log-likelihood.
The LLM is instructed to improve these $K$ expressions.
The system prompt specifies the background, the available variables, the set of operators, and task instructions (see \Cref{fig:method-prompt-system}).
The user prompt presents the $K$ elite expressions, each annotated with its complexity and log-likelihood, together with the diagnostic examples (see \Cref{fig:method-prompt-user-improve}).
To provide the LLM with context information, the prompt includes the average absolute error across all
elites. The three cases with the highest average error are also provided as diagnostic examples.
The LLM response is parsed into the expression tree representation. If parsing fails, a repair prompt is sent to the LLM requesting syntax correction.
If a converted model contains free numeric constants, those constants are re-optimized on the training data.
Finally, the proposed models are injected into the GP
population, where they compete alongside GP-generated candidates in subsequent generations.

\begin{figure}[htbp]
\centering
    \begin{tcolorbox}[
      % colback=white,
      % colframe=black,
      % boxrule=0.6pt,
      arc=2mm,
      left=2mm,
      right=2mm,
      top=2mm,
      bottom=2mm,
      width=0.75\linewidth
    ]
    \scriptsize
    \noindent You are an expert in behavioral decision models.
    You will help propose and/or improve mathematical expressions used to predict human choices between risky gambles.
    Each problem is a choice between two alternatives \(A\) and \(B\), and the target is the probability of choosing \(B\).
    The softmax decision rule is automatically applied to the output of the expression.
    Your task is to propose parsimonious expressions that effectively capture human choices under risk.
    
    \medskip
    \noindent \textbf{Available features per gamble (use EXACTLY these variable names):}\\
    \(\texttt{A\_outcomes}, \texttt{B\_outcomes}\): vector feature \texttt{outcomes} for alternatives \(A\) and \(B\).\\
    \(\texttt{A\_probs}, \texttt{B\_probs}\): vector feature \texttt{probs} for alternatives \(A\) and \(B\).
    
    \medskip
    \noindent \textbf{Operators available:}\\
    \(\texttt{+}\) \((S,S)\to S\) \(\mid\) \((V,V)\to V\): element-wise or scalar addition.\\
    \(\texttt{-}\) \((S,S)\to S\) \(\mid\) \((V,V)\to V\): element-wise or scalar subtraction.\\
    \(\texttt{*}\) \((S,S)\to S\) \(\mid\) \((S,V)\to V\) \(\mid\) \((V,S)\to V\): scalar-broadcast or scalar multiplication.\\
    \(\texttt{/}\) \((S,S)\to S\) \(\mid\) \((S,V)\to V\) \(\mid\) \((V,S)\to V\): scalar-broadcast or scalar division.\\
    \(\texttt{dot}\) \((V,V)\to S\): dot product (e.g., \(\texttt{dot(A\_outcomes, A\_probs)}\)).\\
    \(\texttt{signed\_pow}\) \((S,S)\to S\) \(\mid\) \((S,V)\to V\) \(\mid\) \((V,S)\to V\): sign-preserving power, \(\mathrm{sign}(x)\lvert x\rvert^{y}\).
    
    \medskip
    \noindent \textbf{Rules:}\\
      - Output only the logit score expression (the softmax rule is applied separately).\\
      - Keep expressions concise --- prefer lower complexity.\\
      - Use valid {\sc Python} or {\sc SymPy}-compatible syntax.\\
      - Each expression must be parseable by sympy.\\
      - Use ONLY the variable names listed above.
    \end{tcolorbox}
    \caption{System prompt for EXPLOIT with {\tt choices13k} schema.}    \label{fig:method-prompt-system}
\end{figure}

\begin{figure}[htbp]
\centering
    \begin{tcolorbox}[
      % colback=white,
      % colframe=black,
      % boxrule=0.6pt,
      arc=2mm,
      left=2mm,
      right=2mm,
      top=2mm,
      bottom=2mm,
      width=0.75\linewidth
    ]
    \scriptsize
    \noindent\textbf{Current Best Expressions}\\
    \{expression\_list\}
    
    \medskip
    \noindent\textbf{Hardest Prediction Cases}\\
    These are the cases where the current best expressions cannot predict well.\\
    \{error\_examples\}
    
    \medskip
    \noindent\textbf{Your Task}\\
    For EACH of the \{n\} expressions above, output exactly ONE improved variant in the SAME ORDER.\\
    Output exactly \{n\} numbered lines:\\
    \{output\_template\}
    
    \medskip
    \noindent\textbf{Important:}\\
      - Preserve the numbering (1., 2., \dots).\\
      - Each line must contain only the expression, no explanation.\\
      - If you cannot improve an expression, output it unchanged.
    \end{tcolorbox}
    \caption{User prompt template for EXPLOIT.}\label{fig:method-prompt-user-improve}
\end{figure}

\paragraph{\texttt{Organize}: An Ontology of Decision Models}

\begin{figure}[htbp]
    \centering
    \begin{subfigure}[b]{0.58\linewidth}
        \centering
        \includegraphics[width=\linewidth]{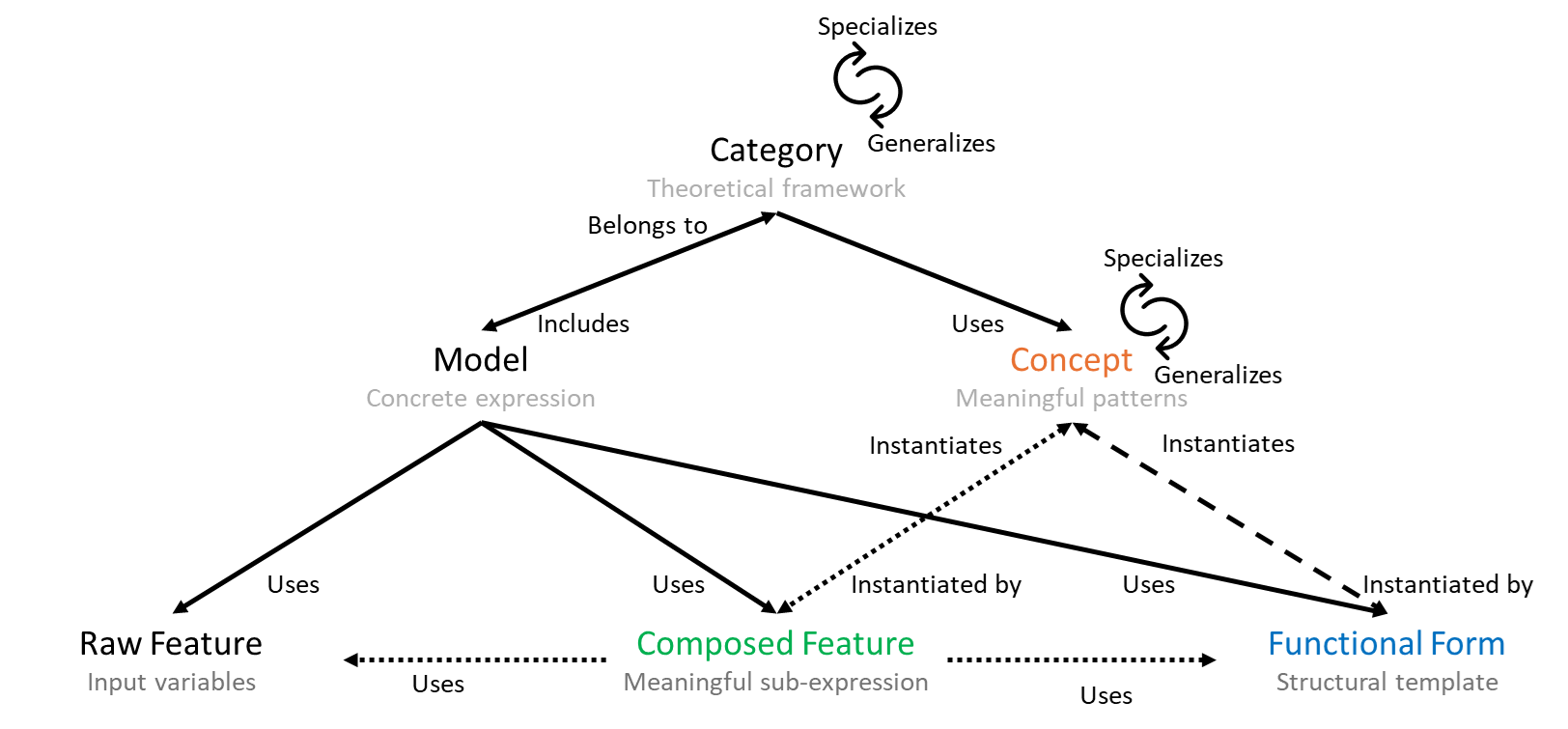}
        \caption{Node and edge types in the ontology graph.}
        \label{fig:method-ontology-graph}
    \end{subfigure}
    \hfill
    \begin{subfigure}[b]{0.4\linewidth}
        \centering
        \includegraphics[width=\linewidth]{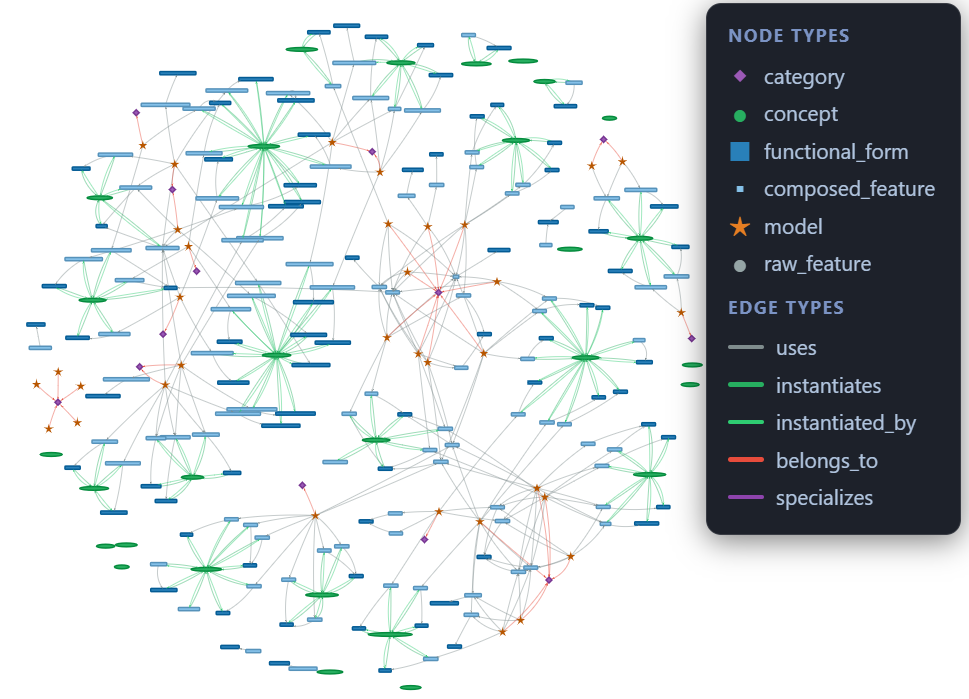}
        \caption{An example ontology graph.}
        \label{fig:method-ontology-example}
    \end{subfigure}
    \caption{Ontology graph schema and an illustrative example.}
    \label{fig:method-ontology}
\end{figure}

We maintain a graph-structured ontology that organizes discovered symbolic models and exposes reusable functional forms and semantic insights.
As illustrated in \Cref{fig:method-ontology-graph}, the ontology
contains six types of nodes: \textbf{Category}, \textbf{Concept},
\textbf{Functional Form}, \textbf{Composed Feature}, \textbf{Model}, and
\textbf{Raw Feature} (atomic inputs). Intuitively, \emph{categories} represent
broad theoretical frameworks (e.g., Expected Utility Theory); \emph{concepts}
represent reusable behavioral primitives (e.g., probability weighting);
\emph{functional forms} are abstract parameterized templates written in
placeholder notation (e.g., $v_1^{c_1}$, where $v_1$ is a feature vector and
$c_1$ is a tunable constant). A \emph{composed feature} is a concrete
sub-expression obtained by binding a functional form's slots to specific raw
features (e.g., $A\_\textit{outcomes}^{c_1}$), while a \emph{model} is a
complete expression.
%%%%%%%%%%%%%%%%%%%%%%%%%%%%
Edges encode typed relations. \texttt{SPECIALIZES}/\texttt{GENERALIZES}
capture hierarchical refinement within the same layer (Category-Category or
Concept-Concept. E.g., linear utility as a special case of power utility with $c_1{=}1$).
\texttt{INSTANTIATES}/\texttt{INSTANTIATED\_BY} ground abstract patterns in
interpretable constructs: composed features and functional forms instantiate
concepts. \texttt{USES} links each model to the raw features, composed
features, and functional forms it depends on; \texttt{BELONGS\_TO} assigns
models to one or more categories. These typed edges facilitates graph query.

Model nodes are clustered to form categories. Each model is represented as a set of composed features, and clustering follows the DP Mean algorithm \citep{kulis2012revisiting}: each new model is assigned to the most similar cluster in terms of Jaccard similarity, unless the similarity falls below some threshold. In that case, 
a new cluster is intialized to accomodate the model.

To support exploration control across epochs, non-model nodes carry an
exploration status (\texttt{hypothetical} $\rightarrow$ \texttt{targeted}
$\rightarrow$ \texttt{evidenced}), visit and failure counters, and an
intrinsic value tracking the best log-likelihood observed among linked models.
When the LLM proposes a new expression, a rules-based parser resolves any
composed-feature references by substituting their canonical sub-expressions,
extracts the feature bindings of each composed feature (slot $\mapsto$ raw
feature name), and synthesizes a live model with optimized constants. New
composed-feature and functional-form nodes are added as needed, deduplicated
by canonical expression string, and the resulting model is registered as a
\texttt{ModelNode} linked via \texttt{USES} and \texttt{BELONGS\_TO} edges.
The ontology thus accumulates a structured ``map'' of explored territory:
what has been tried, what is empirically supported, and which concept
neighborhoods remain underexplored.

\paragraph{\texttt{Explore}}
While \texttt{Exploit} refines existing candidates, it cannot propose expressions that are structurally novel or grounded in decision-theory constructs not yet represented in the population. The \texttt{Explore} phase addresses this by leveraging the ontology to direct an LLM toward genuinely new regions of the model space.
Each explore round executes two complementary strategies in sequence. The \textbf{LLM-Guided} strategy (\texttt{llm\_guided}) presents the LLM with the current best-performing models and hard prediction cases, then asks it to propose new high-level behavioral concepts and functional-form templates that are (1) not yet represented and (2) potentially addresses the hard prediction cases. These enrichments expand the ontology's vocabulary but do not directly yield executable models. The \textbf{Ontology-Driven} strategy (\texttt{ontology\_driven}) follows immediately: it queries the ontology for underexplored functional forms and composed features (including those just proposed by \texttt{llm\_guided}), and asks the LLM to synthesize concrete symbolic expressions that utilize these components. The resulting model proposals are parsed, constant-optimized, and injected into the origin island's GP population. They are simultaneously cross-routed to all (schema-compatible) islands with numerical constants re-optimized on the destination data, providing structured knowledge transfer across experimental conditions.

\paragraph{Implementation details}
We use state-of-the-art {\tt Gemini 3.5 Flash} for steps involving LLMs, with {\tt temperature=0.7} to encourage LLM in-context exploration.

\section{Experiment on the {\tt Choices13k} Dataset}
\label{sec:append-experiment-13k}

\paragraph{Experiment Setup}
In this experiment we utilize the {\tt choices13k} \citep{petersen2021deep} dataset. The dataset collects around 50 human choices on each of the 8,931 choice problems without ambiguity, where the proportion is used to estimate choice probability.
We use log-likelihood, accuracy, and AIC/BIC to evaluate discovered models, and also compare discovered models with existing theories of human risky decision making.

\paragraph{Results}
Symbolic regression successfully discovered a rich set of models spanning different complexity (see \Cref{tab:pareto_notable_ablation13k2_nsga2_full}).
The pareto frontier captures the current best trade-off between prediction power and model complexity.
These models include re-discovered classical theories of risky choice, such as expected utility theory (complexity = 14) and prospect theory (complexity = 30).
The discovered propspect theory model achieves higher preference accuracy (82.4\%) than the neural propspect theory (82.33\%, \citep{peterson2021using}).
The best-performing model (complexity = 38) exhibits asymmetric comparison of prospects: it utilizes PT-style valuation with power transformation for both probability weighting and utility functions, but uses {\tt relu($\cdot$)} to focus on the advantage of each prospect for the final choice.

\begin{table}[ht]
\centering
\caption{Models on the discovered Pareto frontier on the Choices13k dataset. CE$_\mathrm{test}$ = test cross-entropy; MSE$_\mathrm{test}$ = test mean-squared error; Acc$_\mathrm{test}$ = test accuracy.}
\label{tab:pareto_notable_ablation13k2_nsga2_full}
\scriptsize
\setlength{\tabcolsep}{4pt}
\begin{tabular}{rrrr p{9cm}}
\toprule
Complexity & CE$_\mathrm{test}$ & MSE$_\mathrm{test}$ & Acc$_\mathrm{test}$ & Expression \\
\midrule
  2 & 0.6931 & 0.0488 & 0.504 & $-c_{1}$ \\
  6 & 0.6906 & 0.0475 & 0.541 & $-c_{1} \cdot \mathrm{dot}(\mathbf{p}_B, \mathbf{o}_B)$ \\
  8 & 0.6873 & 0.0459 & 0.594 & $-c_{1} \cdot \mathrm{dot}(\mathrm{sgp}(\mathbf{o}_B, c_{2}), \mathbf{p}_B)$ \\
  10 & 0.6457 & 0.0256 & 0.772 & $c_{1} \cdot \mathrm{dot}(\mathbf{p}_A, \mathbf{o}_A)$\newline\hspace*{1em}$- c_{2} \cdot \mathrm{dot}(\mathbf{p}_B, \mathbf{o}_B)$ \\
  12 & 0.6434 & 0.0247 & 0.773 & $-c_{1} \cdot \mathrm{sgp}(-\mathrm{dot}(\mathbf{o}_A, \mathbf{p}_A) + \mathrm{dot}(\mathbf{o}_B, \mathbf{p}_B), c_{2})$ \\
  14 & 0.6385 & 0.0221 & 0.813 & $c_{1} \cdot \mathrm{dot}(\mathbf{p}_A, \mathrm{sgp}(\mathbf{o}_A, c_{2}))$\newline\hspace*{1em}$- c_{3} \cdot \mathrm{dot}(\mathbf{p}_B, \mathrm{sgp}(\mathbf{o}_B, c_{4}))$ \\
  18 & 0.6355 & 0.0208 & 0.820 & $c_{1} \cdot \mathrm{dot}(\mathrm{sgp}(\mathbf{p}_A, c_{2}), \mathrm{sgp}(\mathbf{o}_A, c_{3}))$\newline\hspace*{1em}$- c_{4} \cdot \mathrm{dot}(\mathrm{sgp}(\mathbf{p}_B, c_{5}), \mathrm{sgp}(\mathbf{o}_B, c_{6}))$ \\
  30 & 0.6338 & 0.0200 & \textbf{0.824} & $c_{1} \cdot \mathrm{dot}(\exp(-c_{2} \cdot \mathrm{sgp}(-\ln(\mathbf{p}_A), c_{3})), \mathrm{sgp}(\mathbf{o}_A, c_{4}))$\newline\hspace*{1em}$- c_{5} \cdot \mathrm{dot}(\exp(-c_{6} \cdot \mathrm{sgp}(-\ln(\mathbf{p}_B), c_{7})), \mathrm{sgp}(\mathbf{o}_B, c_{8}))$ \\
  38 & \textbf{0.6312} & \textbf{0.0190} & 0.811 & $-c_{1} \cdot \mathrm{relu}(-\mathrm{dot}(\mathrm{sgp}(\mathbf{p}_A, c_{2}), \mathrm{sgp}(\mathbf{o}_A, c_{3})) + \mathrm{dot}(\mathrm{sgp}(\mathbf{p}_B, c_{4}), \mathrm{sgp}(\mathbf{o}_B, c_{5})))$\newline\hspace*{1em}$+ c_{6} \cdot \mathrm{relu}(\mathrm{dot}(\mathrm{sgp}(\mathbf{p}_A, c_{7}), \mathrm{sgp}(\mathbf{o}_A, c_{8})) - \mathrm{dot}(\mathrm{sgp}(\mathbf{p}_B, c_{9}), \mathrm{sgp}(\mathbf{o}_B, c_{10})))$ \\
\bottomrule
\end{tabular}
\end{table}

\begin{figure}[htbp]
    \centering
    \includegraphics[width=\linewidth]{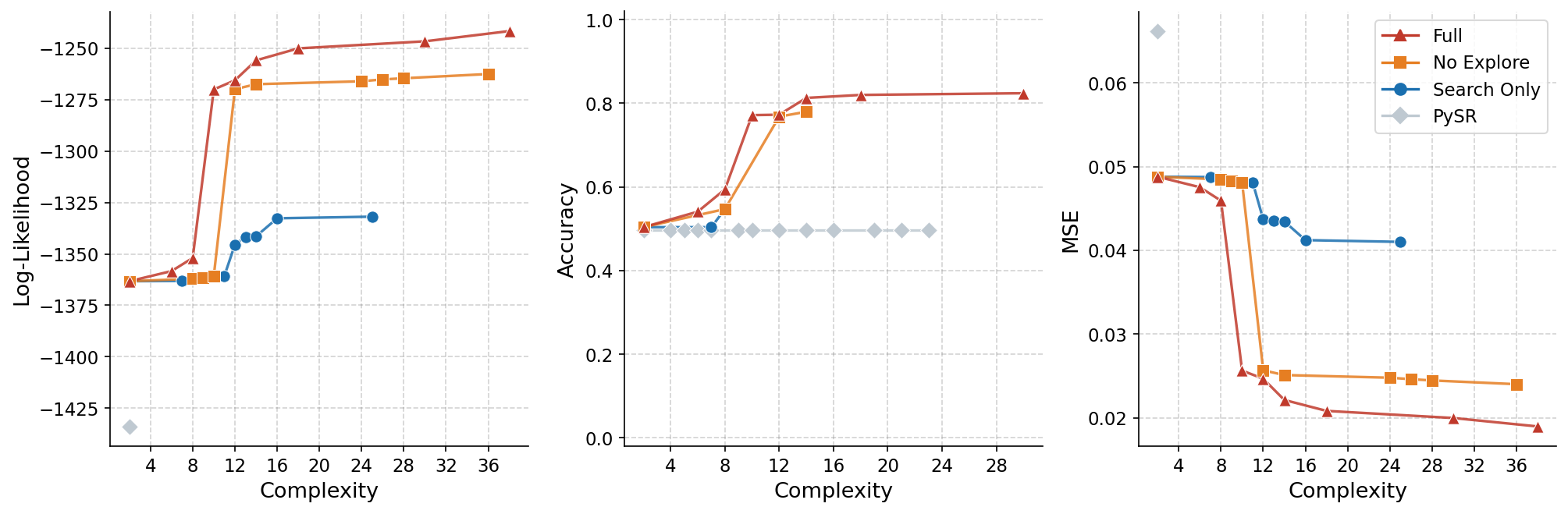}
    \caption{Pareto frontier of models on the {\tt choices13k} data.}
    \label{fig:exp-13k-pareto}
\end{figure}
\paragraph{Ablation}
We compare our framework (Full) with versions without {\sc Explore} (No Explore) and with {\sc Search} only (Search Only). 
We find {\tt Exploit} effectively boosts model performances on  {\sc Search}, while the full framework with {\sc Explore} achieves dominant performance (See \Cref{fig:exp-13k-pareto}).
In comparison, {\sc PySR} \citep{cranmer2023interpretable}, one of the state-of-the-art symbolic methods with unstructured matrix input has difficulty making meaningful discoveries with up to 36 input features.

\section{Risk Attitude in Combinatorial-Risk State Space}\label{sec:risk-attitude}

This section characterizes risk attitudes under combinatorial risk by studying the utility \(u\) on the space of \textit{probability states} rather than on the outcomes.
We first introduce a local directional risk premium and a corresponding local risk-aversion matrix, giving a multidimensional analogue of the Pratt measure.
We then derive tractable global classes of utility consistent with this local structure. In particular, under a rank one restriction, constant risk aversion delivers exponential-type rank-one forms and affine utility as the risk-neutral benchmark, which are the natural analogue of \citet{pratt1964risk}'s results.
% Under the same rank one restriction, decreasing risk aversion yields the more general representation \(u(m)=F(b^\top m)+\eta^\top m+D\), where all curvature is summarized by a single index \(b^\top m\).
These results connect local risk attitude to explicit choice patterns.
Proofs in this section are presented in the Appendix \Cref{sec:appendix-proof-risk-attitude}.

Let
\(\mathcal X=\{(0,0),(1,0),(0,1),(1,1)\}
\)
be the realized outcome space, and let
\(v:\mathcal X\to\mathbb R
\)
be utility over realized outcomes. Each probability state
\(m=(p_A,p_B)\in S:=[0,1]^2
\)
induces a lottery \(L(m)\) on \(\mathcal X\), with
\(
\Pr((1,1)\mid m)=p_Ap_B,\,
\Pr((1,0)\mid m)=p_A(1-p_B),\,
\Pr((0,1)\mid m)=(1-p_A)p_B,\,
\Pr((0,0)\mid m)=(1-p_A)(1-p_B).
\)
Let the initial state be \(m^0=(P_A^{\mathrm{ini}},P_B^{\mathrm{ini}}),
\)
and define the post-decision states
\[
m^A=(P_A^{\mathrm{ini}}+\Delta P_A,\;P_B^{\mathrm{ini}}),
\qquad
m^B=(P_A^{\mathrm{ini}},\;P_B^{\mathrm{ini}}+\Delta P_B).
\]
The induced value on the state space is
\(u(m):=\mathbb E_{L(m)}[v(X)].\)
The risk attitude is encoded in the geometry of \(u\), evaluated relative to the anchor \(m^0=(P_A^{\mathrm{ini}},P_B^{\mathrm{ini}})\).
Importantly, there is no literal risk over the state \(m=(p_A,p_B)\), and the perturbations introduced below are purely a tool to characterize this geometry of $u$.

Fix \(m\in S^\circ=(0,1)^2\), and let \(Y\) be a small mean-zero perturbation with \(m+Y\in S\) almost surely. Define the certainty equivalent set
\(
CE_u(m;Y):=\{c\in S:\ u(c)=\mathbb E[u(m+Y)]\},
\)
and the risk premium set
\(
RP_u(m;Y):=\{\pi\in\mathbb R^2:\ m-\pi\in S,\ u(m-\pi)=\mathbb E[u(m+Y)]\}.
\)
Assume \(u\in C^3(S^\circ)\), and write \(\Sigma_Y:=\mathbb E[YY^\top].
\)
Then
\[
\mathbb E[u(m+Y)]
=
u(m)+\frac12\tr\!\big(H_u(m)\Sigma_Y\big)+O(\mathbb E\|Y\|^3).
\]
If \(\pi\in RP_u(m;Y)\), then \(u(m-\pi)=\mathbb E[u(m+Y)],\)
hence
\begin{equation}
    \nabla u(m)^\top \pi
=
-\frac12\tr\!\big(H_u(m)\Sigma_Y\big)+o(\|Y\|^2).\label{eq:local-risk-premium-def}
\end{equation}
Notice that only the component of \(\pi\) in the direction of \(\nabla u(m)\) is identified.
To obtain a scalar premium, we impose a direction field \(d(m)\neq 0\), and let
\(\pi=\lambda\,d(m).
\)
Then, the premium is defined by \(u\bigl(m-\lambda d(m)\bigr)=\mathbb E[u(m+Y)].\)
If
\(\nabla u(m)^\top d(m)\neq 0,
\)
previous analysis motivates the local risk-attitude index
\begin{equation}
    \mathcal{A}(m) =  -\frac{H_u(m)}{\nabla u(m)^\top d(m)}.\label{eq:local-risk-attitude}
\end{equation}
Thus local risk aversion at $m$ means \(\mathcal A(m)\succeq 0\).

\subsection{Constant Risk Aversion}

Now we proceed to present the constant-risk-aversion
\[
-\frac{H_u(m)}{\nabla u(m)^\top d(m)}=M
\qquad\text{for all }m,
\]
for some constant symmetric matrix \(M\). Equivalently,
\(H_u(m)=\bigl(\nabla u(m)^\top d(m)\bigr)M.
\)

To derive an explicit $u(\cdot)$, we specialize to \(d(m)=(1,1)\). This restrict the risk premium set to the single direction that is indifferent between the dimensions. In the combinatorial risk setting, it means the decision maker is indifferent about \emph{which} outcome is successful, but instead aims to maximize total number of successes.
Then
\[
-\frac{H_u(m)}{\partial_1u(m)+\partial_2u(m)}=M,
\qquad\text{equivalently}\qquad
H_u(m)=-\bigl(\partial_1u(m)+\partial_2u(m)\bigr)M.
\]

\begin{proposition}[rank-one constant risk aversion]\label{prop:r1-const-risk-averse}
Let \(U\subseteq \mathbb R^2\) be connected, and \(u\in C^2(U)\) satisfies that
for all $m\in U$, \(\partial_1u(m)+\partial_2u(m)\neq 0\).
Define the rank-one constant matrix
\[
M=\frac{bb^\top}{b_1+b_2},
\qquad
b=(b_1,b_2)\in\mathbb R^2,
\qquad
b_1+b_2\neq 0.
\]
Then the local risk aversion is constant $M$
\[
-\frac{H_u(m)}{\partial_1u(m)+\partial_2u(m)}=M
\qquad\text{for all }m\in U,
\]
if and only if \(u\) is of the form
\begin{equation}
    u(m)=A e^{-b^\top m}+\eta^\top m+D, \label{eq:utility-const-risk-averse}
\end{equation}
for some constants \(A,D\in\mathbb R\) and \(\eta\in\mathbb R^2\) with $\eta^\top(1,1)=0$.
\end{proposition}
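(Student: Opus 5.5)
The plan is a direct verification for the ``if'' direction and an integration argument for the ``only if'' direction. The integration works by first solving a first-order PDE for the scalar $s(m):=\partial_1u(m)+\partial_2u(m)$ and then integrating the gradient. Throughout, write $z(m)=b^\top m$, so that $\nabla e^{-z}=-b\,e^{-z}$.

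\emph{Sufficiency.} Take $u(m)=Ae^{-b^\top m}+\eta^\top m+D$ with $\eta_1+\eta_2=0$. Then
\[
\nabla u=-Ab\,e^{-z}+\eta,\qquad H_u=A\,bb^\top e^{-z},\qquad \partial_1u+\partial_2u=-A(b_1+b_2)e^{-z}.
\]
The $\eta$ terms cancel in the last expression precisely because $\eta^\top(1,1)=0$. The standing assumption $\partial_1u+\partial_2u\neq0$ forces $A\neq0$. Dividing gives $-H_u/(\partial_1u+\partial_2u)=bb^\top/(b_1+b_2)=M$, as required.

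\emph{Necessity.} Set $g=\nabla u\in C^1(U)$. The hypothesis reads $H_u=-s\,bb^\top/(b_1+b_2)$. Reading this column by column gives $\nabla g_i=-s\,b_i\,b/(b_1+b_2)$ for $i=1,2$.

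\begin{itemize}
\item \textbf{Step 1 (ODE for $s$).} Summing the two columns gives $\nabla s=-s\,b$. Hence $\nabla\bigl(s\,e^{z}\bigr)=e^{z}(\nabla s+s\,b)=0$. Since $U$ is connected, $s=C e^{-z}$ for a constant $C$, and $C\neq0$ because $s\neq0$.
\item \textbf{Step 2 (each partial derivative).} Substitute $s=Ce^{-z}$ back into $\nabla g_i=-C b_i\,b\,e^{-z}/(b_1+b_2)$. The right-hand side equals $\nabla\bigl(C b_i e^{-z}/(b_1+b_2)\bigr)$. Connectedness then gives $g_i=C b_i e^{-z}/(b_1+b_2)+\eta_i$ for constants $\eta_i$.
\item \textbf{Step 3 (the constraint on $\eta$).} Put $A:=-C/(b_1+b_2)$, so that $\nabla u=-Ab\,e^{-z}+\eta$. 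Summing the components yields $s=Ce^{-z}+\eta_1+\eta_2$. Comparing with Step 1 forces $\eta_1+\eta_2=0$.
\item \textbf{Step 4 (integrate).} Finally, $\nabla\bigl(u-Ae^{-z}-\eta^\top m\bigr)=0$ on connected $U$. This yields the constant $D$ and hence the form \eqref{eq:utility-const-risk-averse}.
\end{itemize}

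The only delicate point is regularity, specifically that $s$ is differentiable. This holds because $u\in C^2$ makes $g$, and hence $s$, $C^1$. Connectedness is used each time a vanishing gradient is integrated to a constant. The condition $b_1+b_2\neq0$ is needed both to define $M$ and to define $A$.
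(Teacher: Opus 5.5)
Your proof is correct and follows essentially the same route as the paper: sum the Hessian columns to get $\nabla s=-s\,b$, integrate to $s=Ce^{-b^\top m}$, substitute back and integrate, then read off $\eta^\top(1,1)=0$ by comparing with $s$. Your column-wise integration of $\nabla u$ only unpacks the paper's appeal to ``vanishing Hessian on a connected set implies affine.'' Your choice $A=-C/(b_1+b_2)$ has the correct sign, whereas the written proof takes $A=C/(b_1+b_2)$ and offsets it with a compensating sign slip in $\partial_1u+\partial_2u$; your note that the standing hypothesis forces $A\neq0$ in the sufficiency direction also fills in a point the paper calls obvious.
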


\begin{remark}
The functional form in the rank-one case is the natural multidimensional analogue of \citet{pratt1964risk}'s exponential utility. In the one-dimensional case, constant risk aversion implies an exponential value function because curvature is everywhere proportional to marginal utility. Here the same logic applies, but only along the single rank-one direction selected by \(M\). Since
\[
M=\frac{bb^\top}{b_1+b_2}
\]
has image \(\mathrm{span}\{b\}\), all second-order curvature is confined to the one-dimensional index \(b^\top m\): the Hessian is proportional to \(bb^\top\), so \(u\) can bend only along \(b\), and is locally flat in directions orthogonal to \(b\) up to affine terms. This is why the nonlinear part of \(u\) depends only on \(b^\top m\). The remaining term \(\eta^\top m + D\) is affine, with the restriction \(\eta^\top(1,1)=0\) ensuring that it does not affect the normalization by \(\partial_1u+\partial_2u\). Thus the exponential component captures constant risk sensitivity along the effective one-dimensional risk index \(b^\top m\), while the affine component reflects directions that are invisible to the \(d=(1,1)\) normalization.
\end{remark}

\begin{corollary}[Choice under rank-one constant risk aversion]\label{cor:r1-const-risk-averse}
Let
\(z^{\mathrm{ini}}:=b_1P_A^{\mathrm{ini}}+b_2P_B^{\mathrm{ini}}.
\)
Suppose
\[
u(m)=A e^{-b^\top m}+\eta^\top m + D,
\qquad
b=(b_1,b_2),\qquad
\eta^\top(1,1)=0,
\]
as in Proposition \ref{prop:r1-const-risk-averse}. 
Then the decision-maker prefers \(A\) over \(B\) if and only if
\[
A e^{-z^{\mathrm{ini}}}
\bigl(e^{-b_1\Delta P_A}-e^{-b_2\Delta P_B}\bigr)
+\eta_1(\Delta P_A+\Delta P_B)\ge 0.
\]
\end{corollary}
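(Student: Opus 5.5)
The plan is a direct computation. Under the state-space utility framework, the decision-maker prefers \(A\) over \(B\) exactly when \(u(m^A)\ge u(m^B)\). So it suffices to show that \(u(m^A)-u(m^B)\) equals the displayed expression. Here I take the deterministic (\(T\to0\)) reading of the logistic rule, or equivalently read ``prefers'' as ``weakly higher state-space utility.''

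First, I substitute the post-decision states \(m^A=(P_A^{\mathrm{ini}}+\Delta P_A,\;P_B^{\mathrm{ini}})\) and \(m^B=(P_A^{\mathrm{ini}},\;P_B^{\mathrm{ini}}+\Delta P_B)\) into the form of \(u\) given by Proposition~\ref{prop:r1-const-risk-averse}, assuming the post-decision states lie in the domain \(U\) where that form holds. For the exponential part, \(b^\top m^A=z^{\mathrm{ini}}+b_1\Delta P_A\) and \(b^\top m^B=z^{\mathrm{ini}}+b_2\Delta P_B\). The common factor \(e^{-z^{\mathrm{ini}}}\) then factors out, giving
\[
A e^{-z^{\mathrm{ini}}}\bigl(e^{-b_1\Delta P_A}-e^{-b_2\Delta P_B}\bigr).
\]
The constant \(D\) cancels.

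For the affine part, \(\eta^\top m^A-\eta^\top m^B=\eta_1\Delta P_A-\eta_2\Delta P_B\). The constraint \(\eta^\top(1,1)=0\) gives \(\eta_2=-\eta_1\), so this becomes \(\eta_1(\Delta P_A+\Delta P_B)\). Adding the two parts yields the stated inequality \(u(m^A)-u(m^B)\ge 0\).

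There is no real obstacle. The only points needing care are the sign bookkeeping for \(\eta_2=-\eta_1\) and stating clearly that preference means comparison of \(u\) at the post-decision states. I would also note that this is consistent with the sign-reversed expression for \(u_{\mathrm{CRA}}(m^B)-u_{\mathrm{CRA}}(m^A)\) used in Section~\ref{sec:state-space-benchmarks}.
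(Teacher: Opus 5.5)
Your proposal is correct and matches the paper's proof: substitute $m^A$ and $m^B$ into $u$, factor out $e^{-z^{\mathrm{ini}}}$, cancel $D$, and use $\eta_2=-\eta_1$ to turn $\eta_1\Delta P_A-\eta_2\Delta P_B$ into $\eta_1(\Delta P_A+\Delta P_B)$. You also make explicit that ``prefers'' means $u(m^A)\ge u(m^B)$, which the paper leaves implicit.
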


\begin{proposition}[Risk-neutrality]\label{prop:risk-neutral}
Let \(U\subseteq \mathbb R^2\) be connected, and \(u\in C^2(U)\) satisfies that
for all $m\in U$, \(\partial_1u(m)+\partial_2u(m)\neq 0\).
Then the decision-maker is risk neutural
\[
\frac{H_u(m)}{\partial_1 u(m)+\partial_2 u(m)}=0
\qquad\text{for all }m\in S^\circ
\]
if and only if \(u\) is affine:
\[
u(m)=a+\beta^\top m
\]
for some \(a\in\mathbb R\) and \(\beta\in\mathbb R^2\). Conversely, every affine function satisfies \(H_u\equiv 0\), hence solves the equation with \(M=0\).
\end{proposition}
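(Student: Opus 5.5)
The plan is to prove the two implications separately, and the nontrivial one (risk neutrality forces affinity) turns out to be essentially a one-line consequence of the nonvanishing denominator.

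\emph{Affine implies risk neutral.} This is immediate. If \(u(m)=a+\beta^\top m\), then \(\nabla u=\beta\) is constant and \(H_u\equiv 0\). The hypothesis \(\partial_1u+\partial_2u=\beta_1+\beta_2\neq 0\) makes the quotient well defined, so it vanishes identically. This also covers the final sentence of the statement: \(M=0\) is a constant symmetric matrix, so affine functions fall within the constant-risk-aversion family.

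\emph{Risk neutral implies affine.}
\begin{enumerate}
\item Multiply the identity \(H_u(m)/(\partial_1u+\partial_2u)=0\) by the nonzero scalar \(\partial_1u(m)+\partial_2u(m)\). This gives \(H_u\equiv 0\) on the region.
\item Since \(u\in C^2\), each partial derivative \(\partial_i u\) is \(C^1\) with zero gradient there.
\item On a connected open set, a \(C^1\) function with zero gradient is constant, so \(\nabla u\equiv\beta\) for some constant \(\beta\).
\item The function \(m\mapsto u(m)-\beta^\top m\) then has zero gradient, so by connectedness again it equals a constant \(a\). Hence \(u(m)=a+\beta^\top m\).
\end{enumerate}
This could also be read off Proposition~\ref{prop:r1-const-risk-averse} with \(b=0\), but that case is excluded there because \(b_1+b_2\neq0\) is required, so the direct argument is cleaner.

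\emph{Domain issue.} This is the only step where the statement needs care. The displayed identity is asserted on \(S^\circ\), while \(u\) is defined on \(U\).
\begin{itemize}
\item I will take the intended region to be \(U\cap S^\circ\), assumed open and connected. In the paper's application \(U=S^\circ=(0,1)^2\), which is convex and hence connected.
\item If \(U\) is merely connected and not open, I will apply the argument on its interior. Continuity of \(u\) and of \(\nabla u\) then extends the affine form to the closure of the interior within \(U\).
\end{itemize}
I would state this convention explicitly at the start of the proof. With it in place, the remaining steps are routine.
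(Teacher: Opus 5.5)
Your proof is correct and follows the paper's argument: you clear the nonvanishing denominator to get $H_u\equiv 0$, use the fact that a $C^2$ function with vanishing Hessian on a connected open set is affine (which you unpack into the two zero-gradient steps the paper leaves implicit), and note that the converse is immediate. One small quibble concerns your fallback for non-open $U$: the interior of a connected set can be disconnected (e.g.\ two disks joined by a segment), so that remark does not by itself give a single affine function, and it is cleaner to assume, as the paper implicitly does, an open connected working domain such as $S^\circ=(0,1)^2$.
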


\begin{remark}
Under the normalization \(d=(1,1)\), the directional premium is locally well defined only at states \(m\) such that
\[
\partial_1 u(m)+\partial_2 u(m)\neq 0.
\]
In particular, if \(u(m)=a+\beta^\top m\), this condition reduces to
\(\beta_1+\beta_2\neq 0.
\)
If instead \(\beta_1+\beta_2=0\), then \(d=(1,1)\) is tangent to the indifference lines of \(u\), so the normalization is degenerate.
\end{remark}

Under risk-neutrality, the initial probabilities
\(P_A^{\mathrm{ini}},P_B^{\mathrm{ini}}\) cancel out, so choice depends only on the weighted probability increments.

\begin{corollary}[Choice under risk-neutrality]\label{cor:risk-neutral}
Suppose
\[
u(m)=a+\beta^\top m
\qquad\text{with}\qquad
\beta=(\beta_1,\beta_2),
\]
as in Proposition \ref{prop:risk-neutral}. Then the decision-maker prefers \(A\) over \(B\) if and only if
\[
\beta_1\Delta P_A- \beta_2\Delta P_B \ge 0.
\]
\end{corollary}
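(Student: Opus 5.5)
The statement is Corollary~\ref{cor:risk-neutral}. The plan is a direct evaluation of the affine utility from Proposition~\ref{prop:risk-neutral} at the two post-decision states $m^A$ and $m^B$, followed by a comparison. Following the convention used throughout (and in Corollary~\ref{cor:r1-const-risk-averse}), the decision-maker prefers $A$ over $B$ exactly when $u(m^A)\ge u(m^B)$. The proof therefore reduces to computing the sign of $u(m^A)-u(m^B)$.

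First, substitute $m^A=(P_A^{\mathrm{ini}}+\Delta P_A,\,P_B^{\mathrm{ini}})$ into $u(m)=a+\beta_1 p_A+\beta_2 p_B$. This gives
\[
u(m^A)=a+\beta_1P_A^{\mathrm{ini}}+\beta_2P_B^{\mathrm{ini}}+\beta_1\Delta P_A .
\]
Similarly, for $m^B=(P_A^{\mathrm{ini}},\,P_B^{\mathrm{ini}}+\Delta P_B)$,
\[
u(m^B)=a+\beta_1P_A^{\mathrm{ini}}+\beta_2P_B^{\mathrm{ini}}+\beta_2\Delta P_B .
\]
Subtracting, the constant $a$ and the common term $\beta^\top m^0$ cancel, leaving
\[
u(m^A)-u(m^B)=\beta_1\Delta P_A-\beta_2\Delta P_B .
\]
This is exactly the claimed criterion. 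It also makes precise the preceding remark that the initial probabilities drop out under risk neutrality.

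There is no real obstacle here; the argument is a linearity computation. The only point worth flagging is interpretive. If $\beta_1+\beta_2=0$, the normalization $d=(1,1)$ is degenerate, as noted in the remark after Proposition~\ref{prop:risk-neutral}. Even so, the preference comparison $u(m^A)\ge u(m^B)$ remains well defined, so the corollary holds for any $\beta$. I would state this briefly, treat the characterization of $u$ as affine as given by Proposition~\ref{prop:risk-neutral}, and write out the two-line computation above.
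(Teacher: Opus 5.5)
Your proposal is correct and matches the paper's proof: you evaluate the affine utility at $m^A$ and $m^B$ and subtract, so that $a$ and the initial-probability terms cancel and $\beta_1\Delta P_A-\beta_2\Delta P_B$ remains. Your side remark is also accurate: the computation does not depend on the normalization $d=(1,1)$, so the comparison stays well defined even when $\beta_1+\beta_2=0$.
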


\subsection{Decreasing Risk Aversion}

We now extend the preceding Pratt-style analysis from constant to decreasing risk aversion.
Fix the normalization $d(m)=(1,1),$
so that the directional risk premium is measured along the common shift of both coordinates.
For a small mean-zero perturbation \(Y\) with covariance matrix
\(\Sigma_Y:=\mathbb E[YY^\top],
\)
the local premium \(\lambda=\lambda(m;Y)\) defined by
\[
u\bigl(m-\lambda(1,1)\bigr)=\mathbb E[u(m+Y)]
\]
satisfies
\[
\lambda(m;Y)
=
-\frac12\,
\frac{\tr\!\big(H_u(m)\Sigma_Y\big)}
{\partial_1u(m)+\partial_2u(m)}
+o(\|Y\|^2).
\]
Accordingly, define the local risk-aversion matrix
\begin{equation}
\mathcal A(m):=
-\frac{H_u(m)}{\partial_1u(m)+\partial_2u(m)},
\qquad
\partial_1u(m)+\partial_2u(m)\neq 0.
\label{eq:DRA-local-index}
\end{equation}
Then
\[
\lambda(m;Y)
=
\frac12\tr\!\big(\mathcal A(m)\Sigma_Y\big)+o(\|Y\|^2).
\]

Local risk aversion means \(\mathcal A(m)\succeq 0\), equivalently \(H_u(m)\preceq 0\) when \(\partial_1u(m)+\partial_2u(m)>0\). The natural analogue of decreasing absolute risk aversion is that this matrix decreases as the current status increases.

\begin{definition}[Decreasing risk aversion]
Let \(U\subseteq \mathbb R^2\) be connected, and let \(u\in C^2(U)\) satisfy \(\partial_1u(m)+\partial_2u(m)>0
\)
for all $m\in U$.
We say that \(u\) exhibits \emph{decreasing risk aversion} on \(U\) if

\begin{enumerate}
    \item for every \(m\in U\), \(\mathcal A(m)\succeq 0,\)
    so every mean-zero risk carries a nonnegative local premium;
    \item whenever \(m',m\in U\) satisfy \(m'\ge m\) componentwise,
    \(
    \mathcal A(m')\preceq \mathcal A(m).
    \)
\end{enumerate}
\end{definition}

A tractable solution class again arises in the rank-one case. Fix \(b=(b_1,b_2)\) with \(b_1+b_2\neq 0\), and suppose the local risk-aversion matrix takes the form
\begin{equation}
\mathcal A(m)
=
\rho(b^\top m)\,\frac{bb^\top}{b_1+b_2},
\label{eq:DRA-rank-one-index}
\end{equation}
where \(\rho:\mathbb R\to\mathbb R\) is a scalar function. Then risk aversion corresponds to \(\rho\ge 0\), and decreasing risk aversion corresponds to \(\rho\) being weakly decreasing along the index \(b^\top m\).

\begin{proposition}[rank-one decreasing risk aversion]\label{prop:r1-decrease-risk-averse}
Let \(U\subseteq\mathbb R^2\) be connected, and let \(u\in C^2(U)\) satisfy \(\partial_1u(m)+\partial_2u(m)>0
\)
for all $m\in U$.
Fix \(b=(b_1,b_2)\in\mathbb R^2\) with \(b_1+b_2\neq 0\). Then
\[
-\frac{H_u(m)}{\partial_1u(m)+\partial_2u(m)}
=
\rho(b^\top m)\,\frac{bb^\top}{b_1+b_2}
\qquad\text{for all }m\in U
\]
if and only if \(u\) is of the form
\begin{equation}
u(m)=F(b^\top m)+\eta^\top m + D,
\label{eq:DRA-rank-one-utility}
\end{equation}
for some \(F\in C^2\), \(D\in\mathbb R\), and \(\eta\in\mathbb R^2\) satisfying $\eta^\top(1,1)=0,$
with
\begin{equation}
-\frac{F''(z)}{F'(z)}=\rho(z)
\qquad\text{for all }z=b^\top m,\ m\in U.
\label{eq:DRA-one-dimensional-ode}
\end{equation}
\end{proposition}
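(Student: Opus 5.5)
The plan is to prove the two directions separately, with the "if" direction handled by direct computation and the "only if" direction by integrating the Hessian condition in the spirit of Proposition~\ref{prop:r1-const-risk-averse}. Write $s:=b_1+b_2\neq 0$ and $\mathbf 1=(1,1)$.

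For sufficiency, suppose $u(m)=F(b^\top m)+\eta^\top m+D$ with $\eta^\top\mathbf 1=0$. Then
\[
\nabla u(m)=F'(b^\top m)\,b+\eta,\qquad H_u(m)=F''(b^\top m)\,bb^\top .
\]
It follows that $\partial_1u+\partial_2u=F'(b^\top m)\,s$. Since this quantity is positive by hypothesis, $F'\neq 0$ on the relevant range. Therefore
\[
-\frac{H_u(m)}{\partial_1u(m)+\partial_2u(m)}=-\frac{F''}{F'}\,\frac{bb^\top}{s}=\rho(b^\top m)\,\frac{bb^\top}{s},
\]
which is exactly the claimed condition when \eqref{eq:DRA-one-dimensional-ode} holds.

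For necessity, set $g(m):=\partial_1u+\partial_2u>0$. The hypothesis reads $H_u(m)=-\rho(b^\top m)\,g(m)\,bb^\top/s$, so $H_u$ is everywhere a scalar multiple of $bb^\top$. Let $b^\perp=(-b_2,b_1)$. Then $H_u\,b^\perp=0$, i.e.\ $\nabla(\partial_{b^\perp}u)=H_ub^\perp=0$. On connected $U$ this makes $\partial_{b^\perp}u$ constant, say $\kappa$.

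Next I would choose $\eta$ so that $\eta^\top\mathbf 1=0$ and $\eta^\top b^\perp=\kappa$. The condition $\eta=t(1,-1)$ gives $t(-b_2-b_1)=\kappa$, which is solvable because $s\neq0$. Then $w:=u-\eta^\top m$ satisfies $\partial_{b^\perp}w=0$, so $w$ is constant along lines in direction $b^\perp$. Hence $w(m)=F(b^\top m)+D$ for some $F$, with $F\in C^2$ since $w\in C^2$ (e.g.\ $F(z)=w(zb/\|b\|^2)$ restricted appropriately).

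The main obstacle is this step: a function with zero derivative in direction $b^\perp$ is constant on each connected component of the intersection of $U$ with lines orthogonal to $b$, not necessarily on the whole level set $\{b^\top m=z\}\cap U$. I would handle it either by assuming $U$ convex (as for $S^\circ=(0,1)^2$, the case actually used) or by arguing locally and patching via the constancy of $\kappa$. I will state the convexity of $U$ explicitly if needed, mirroring how Proposition~\ref{prop:r1-const-risk-averse} presumably handles it.

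Finally, I would recover the ODE. With $u=F(b^\top m)+\eta^\top m+D$, the computation from the sufficiency direction gives
\[
H_u=F''\,bb^\top=-\rho\,F'\,s\,\frac{bb^\top}{s}=-\rho\,F'\,bb^\top .
\]
When $b\neq0$ this yields $F''=-\rho F'$. Since $F's=g>0$ forces $F'\neq0$, we obtain \eqref{eq:DRA-one-dimensional-ode}, which completes the proof.
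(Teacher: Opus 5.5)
\textbf{Gap: the proof does not reach the stated generality.} Your sufficiency direction is correct. So are the observation $H_u b^\perp=0$ (giving a globally constant $\kappa=\partial_{b^\perp}u$), the choice $\eta=t(1,-1)$, and the final extraction of $F''=-\rho F'$. For convex $U$ your argument is complete, and more generally whenever every slice $\{b^\top m=z\}\cap U$ is connected. The gap is the one you flag, and neither of your remedies closes it for the statement as given.

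Assuming convexity proves a weaker proposition. ``Patching via the constancy of $\kappa$'' does not work, because $\kappa$ has already been absorbed into $\eta$. Moreover, $\partial_{b^\perp}w\equiv 0$ on a connected set does not make $w$ a function of $b^\top m$. For instance, take $b=(1,0)$ and
\[
U=\bigl((0,3)\times(0,1)\bigr)\cup\bigl((0,1)\times(0,3)\bigr)\cup\bigl((0,3)\times(2,3)\bigr).
\]
Set $w=m_1$ on the first two rectangles and $w=m_1+\psi(m_1)$ on the third, with $\psi$ smooth and vanishing on $(-\infty,1]$. Then $w\in C^\infty(U)$ and $\partial_2 w\equiv 0$, yet $w$ takes two values on the slice $m_1=2$. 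Such a $w$ violates the hypothesis only through the ODE in the $b$-direction. That is, it violates the fact that the curvature factor is $\rho$ evaluated at $b^\top m$, which your argument has not yet used at that point.

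\textbf{The missing step (the paper's key idea).} Apply the hypothesis to $(1,1)^\top$, i.e.\ take row sums of $H_u$:
\[
\nabla g=H_u(1,1)^\top=-\rho(b^\top m)\,g\,b .
\]
Here $\rho$ is continuous on the interval $I=b^\top(U)$, since $\rho(b^\top m)$ is continuous in $m$ and $m\mapsto b^\top m$ is open. So choose $R$ with $R'=\rho$ on $I$. Then $\nabla\bigl(\log g+R(b^\top m)\bigr)=0$, and connectedness alone gives $g=Ce^{-R(b^\top m)}$ on all of $U$. This works because the identity is integrated along arbitrary paths in $U$, not along lines.

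The paper then defines $F$ on $I$ by $F'(z)=Ce^{-R(z)}/(b_1+b_2)$ and checks $H\bigl(F(b^\top m)\bigr)=H_u$. It concludes that $u-F(b^\top m)$ is affine, with $\eta^\top(1,1)=0$ forced by comparing $\partial_1u+\partial_2u$.

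You can also graft this into your own framework. Write $\nabla w=\phi\,b$; then $g=(b_1+b_2)\phi$, so $\phi=C'e^{-R(b^\top m)}$ globally. Taking $F'=C'e^{-R}$, the function $w-F(b^\top m)$ has zero gradient and hence is constant on $U$. With that step added, no convexity assumption is needed.
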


\begin{remark}
The rank-one form shows that decreasing risk aversion remains effectively one-dimensional. The nonlinear part of utility depends only on the index \(b^\top m\), and the scalar function
\[
\rho(z)=-\frac{F''(z)}{F'(z)}
\]
is the exact analogue of \citet{pratt1964risk}'s absolute risk-aversion coefficient. Risk aversion requires \(\rho(z)\ge 0\), while decreasing risk aversion requires \(\rho\) to be weakly decreasing in \(z\). The constant-risk-aversion case is recovered when \(\rho\) is constant, in which case \(F\) is exponential.
\end{remark}

\begin{remark}
    The constant-risk-aversion case is recovered by taking
\(
F(z)=A e^{-z}.
\)
\end{remark}

\begin{corollary}\label{cor:r1-linear-decrease-risk-averse}
Under the conditions of Proposition \ref{prop:r1-decrease-risk-averse}, suppose
\[
\rho(w)=-kw,
\qquad k\ge 0.
\]
Then 
\[
F'(w)=C e^{\frac{k}{2}w^2}
\]
for some constant \(C\in\mathbb R\), and therefore
\[
F(w)=C\int_0^w e^{\frac{k}{2}t^2}\,dt + D
\]
for some constants \(C,D\in\mathbb R\).
\end{corollary}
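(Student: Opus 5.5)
The plan is to reduce the statement to a scalar linear ODE for $F'$ on the index interval, solve it with an integrating factor, and then integrate once more.

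By Proposition~\ref{prop:r1-decrease-risk-averse}, $u(m)=F(b^\top m)+\eta^\top m+D$ with $\eta^\top(1,1)=0$, and $F$ satisfies \eqref{eq:DRA-one-dimensional-ode} on $I:=\{b^\top m: m\in U\}$. Since $U$ is connected and $m\mapsto b^\top m$ is continuous, $I$ is connected, so it is an interval. Moreover
\[
\partial_1u+\partial_2u=(b_1+b_2)F'(b^\top m)+\eta^\top(1,1)=(b_1+b_2)F'(b^\top m),
\]
and this is strictly positive on $U$. Hence $F'$ never vanishes on $I$, and \eqref{eq:DRA-one-dimensional-ode} is well posed there.

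Substituting $\rho(w)=-kw$ into \eqref{eq:DRA-one-dimensional-ode} gives $F''(w)/F'(w)=kw$. Setting $y:=F'$, which is $C^1$ because $F\in C^2$, this becomes the linear first-order ODE
\[
y'(w)=k\,w\,y(w),\qquad w\in I.
\]
Use the integrating factor $e^{-\frac{k}{2}w^2}$:
\[
\frac{d}{dw}\bigl(e^{-\frac{k}{2}w^2}y(w)\bigr)=e^{-\frac{k}{2}w^2}\bigl(y'-kwy\bigr)=0 .
\]
So $e^{-\frac{k}{2}w^2}F'(w)$ is constant on the interval $I$. Call this constant $C$; then $F'(w)=Ce^{\frac{k}{2}w^2}$. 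This route avoids logarithms of $|F'|$ and any separate sign argument. The positivity condition above additionally forces $C(b_1+b_2)>0$.

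Finally, integrate $F'$ over $I$ from a base point. When $0\in I$, this yields $F(w)=C\int_0^w e^{\frac{k}{2}t^2}\,dt+D$. Otherwise, integrate from any $w_0\in I$ and absorb $C\int_0^{w_0}e^{\frac{k}{2}t^2}dt$ into the additive constant; the integrand is entire, so the expression still makes sense.

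The only delicate step is justifying the division by $F'$ and the use of a single constant on all of $I$. Both follow from the connectedness of $I$ and the standing positivity assumption on $\partial_1u+\partial_2u$, so I do not expect a genuine obstacle.
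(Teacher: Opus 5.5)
Your proof is correct and follows the same route as the paper: reduce to the linear ODE $F''=kwF'$ for $F'$, solve it, and integrate once more. Your integrating factor $e^{-\frac{k}{2}w^2}$ replaces the paper's $\frac{d}{dw}\log F'$ step and is slightly more careful, because it also covers $F'<0$ (which happens when $b_1+b_2<0$) without a separate sign argument.
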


\begin{corollary}\label{cor:r1-exp-decrease-risk-averse}
Under the conditions of Proposition \ref{prop:r1-decrease-risk-averse}, suppose
\[
\rho(w)=e^{-\gamma w},
\qquad \gamma>0.
\]
Then
\[
F'(w)=C\exp\!\Bigl(\frac{1}{\gamma}e^{-\gamma w}\Bigr)
\]
for some constant \(C\in\mathbb R\), and therefore
\[
F(w)=C\int_0^w \exp\!\Bigl(\frac{1}{\gamma}e^{-\gamma t}\Bigr)\,dt + D
\]
for some constants \(C,D\in\mathbb R\).
\end{corollary}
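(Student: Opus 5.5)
This is a one-dimensional ODE, so I will treat it exactly as in the Pratt scalar case. By Proposition \ref{prop:r1-decrease-risk-averse}, the curvature of $u$ is carried by a scalar function $F$ of the index $w=b^\top m$, and $F$ satisfies $-F''(w)/F'(w)=\rho(w)$ on the range of the index. Substituting $\rho(w)=e^{-\gamma w}$ gives the first-order linear ODE
\[
G'(w)=-e^{-\gamma w}G(w)\qquad\text{for } G:=F'.
\]
The plan is to solve this for $G$ and then integrate once more.

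First I will show that $G$ has constant sign on the (connected) range of $w$. Proposition \ref{prop:r1-decrease-risk-averse} requires $\partial_1u+\partial_2u>0$. Since $\eta^\top(1,1)=0$, this quantity equals $(b_1+b_2)F'(b^\top m)$, so $F'$ never vanishes on the index range. The image of the connected set $U$ under the continuous map $m\mapsto b^\top m$ is an interval $I$, so $G$ has a fixed sign on $I$.

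On $I$ I can then write $(\ln|G|)'=-e^{-\gamma w}$. An antiderivative of $-e^{-\gamma w}$ is $\frac{1}{\gamma}e^{-\gamma w}$, which uses $\gamma>0$. Integrating gives $\ln|G(w)|=\frac{1}{\gamma}e^{-\gamma w}+c$. Absorbing the sign and $e^{c}$ into a single constant $C$ gives
\[
F'(w)=C\exp\!\Bigl(\tfrac{1}{\gamma}e^{-\gamma w}\Bigr).
\]
As a cleaner alternative, I would multiply by the integrating factor $\exp(-\frac{1}{\gamma}e^{-\gamma w})$ and note that the product has zero derivative. This avoids the sign discussion and also covers the trivial solution $C=0$. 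I will then verify by direct differentiation that $F''/F'=-e^{-\gamma w}$, which confirms the formula.

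Finally, by the fundamental theorem of calculus,
\[
F(w)=C\int_0^w \exp\!\Bigl(\tfrac{1}{\gamma}e^{-\gamma t}\Bigr)\,dt+D.
\]
The integrand is continuous and positive on all of $\mathbb R$, so the integral is well defined on any interval. If $0\notin I$, the base point $0$ can be replaced by any point of $I$ with the difference absorbed into $D$, or the integrand can simply be extended to $\mathbb R$.

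The only real obstacle is bookkeeping. The ODE holds only on the index interval $I$, not globally, and the constant $C$ must be allowed to have either sign; its sign is fixed by $(b_1+b_2)C>0$. I expect the integrating-factor route to handle both points with essentially no casework.
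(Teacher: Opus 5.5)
Your proposal is correct and follows the paper's route: integrate \((\log F')'=-e^{-\gamma w}\) to obtain \(F'(w)=C\exp\!\bigl(\tfrac{1}{\gamma}e^{-\gamma w}\bigr)\), then integrate once more by the fundamental theorem of calculus. Your extra bookkeeping tightens a step the paper glosses over when it writes \(\log F'\) and \(\log C\) directly. That bookkeeping consists of using \(\partial_1u+\partial_2u=(b_1+b_2)F'(b^\top m)>0\) to justify taking \(\log|F'|\) on the index interval, which lets \(C\) have either sign, together with the integrating-factor check.
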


\begin{corollary}[Choice under rank-one decreasing risk aversion]\label{cor:r1-decrease-risk-averse}
Let
\(
z^{\mathrm{ini}}:=b_1P_A^{\mathrm{ini}}+b_2P_B^{\mathrm{ini}}.
\)
Suppose
\[
u(m)=F(b^\top m)+\eta^\top m + D,
\qquad
b=(b_1,b_2),\qquad
\eta^\top(1,1)=0,
\]
as in Proposition \ref{prop:r1-decrease-risk-averse}. 
Then the decision-maker prefers \(A\) over \(B\) if and only if
\[
F(z^{\mathrm{ini}}+b_1\Delta P_A)-F(z^{\mathrm{ini}}+b_2\Delta P_B)
+\eta_1(\Delta P_A+\Delta P_B)\ge 0.
\]
\end{corollary}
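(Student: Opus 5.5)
The plan is a direct substitution into the rank-one representation of Proposition~\ref{prop:r1-decrease-risk-averse}. No differential-equation argument is needed, because the utility is already given in the closed form $u(m)=F(b^\top m)+\eta^\top m+D$. The preference convention is the one used for the state-space utility models in Section~\ref{sec:state-space-benchmarks}: the decision-maker (weakly) prefers investing in $A$ over $B$ exactly when $u(m^A)\ge u(m^B)$, where $m^A=(P_A^{\mathrm{ini}}+\Delta P_A,\,P_B^{\mathrm{ini}})$ and $m^B=(P_A^{\mathrm{ini}},\,P_B^{\mathrm{ini}}+\Delta P_B)$. So it suffices to compute $u(m^A)-u(m^B)$ and show it equals the expression in the statement.

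First, I would evaluate the index $b^\top m$ at the two post-decision states. By linearity, $b^\top m^A = b_1P_A^{\mathrm{ini}}+b_2P_B^{\mathrm{ini}}+b_1\Delta P_A = z^{\mathrm{ini}}+b_1\Delta P_A$. Likewise $b^\top m^B = z^{\mathrm{ini}}+b_2\Delta P_B$. The nonlinear part therefore contributes
\[
F(z^{\mathrm{ini}}+b_1\Delta P_A)-F(z^{\mathrm{ini}}+b_2\Delta P_B).
\]

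Second, I would handle the affine part. The constant $D$ cancels. The linear term gives
\[
\eta^\top m^A-\eta^\top m^B=\eta_1\Delta P_A-\eta_2\Delta P_B.
\]
The constraint $\eta^\top(1,1)=0$ means $\eta_2=-\eta_1$, so this becomes $\eta_1(\Delta P_A+\Delta P_B)$. Adding the two pieces gives exactly the left-hand side of the stated inequality, and the equivalence with $u(m^A)\ge u(m^B)$ follows.

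There is no real obstacle. The only points needing care are the sign bookkeeping in the affine term and stating the preference convention explicitly, including that ties count as preferring $A$. I would also note that the result is consistent with the main-text expression for $u_{\mathrm{DRA}}(m^B)-u_{\mathrm{DRA}}(m^A)$, which is simply the negative of the expression derived here. Setting $F(z)=Ae^{-z}$ recovers Corollary~\ref{cor:r1-const-risk-averse} as a sanity check.
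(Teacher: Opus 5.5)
Your proposal is correct and follows the same route as the paper: you substitute $b^\top m^A=z^{\mathrm{ini}}+b_1\Delta P_A$ and $b^\top m^B=z^{\mathrm{ini}}+b_2\Delta P_B$, cancel $D$, and use $\eta_2=-\eta_1$ to reduce the affine term to $\eta_1(\Delta P_A+\Delta P_B)$. Stating the preference convention $u(m^A)\ge u(m^B)$ (ties counting as preferring $A$) is a small improvement, since the paper's proof only computes $u(m^A)-u(m^B)$ and leaves the link to preference implicit.
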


\section{Proofs in \Cref{sec:risk-attitude}}\label{sec:appendix-proof-risk-attitude}
\subsection{Proof for \Cref{prop:r1-const-risk-averse}}
\begin{proof}
Write \(s(m):=\partial_1u(m)+\partial_2u(m).\)
The equation is equivalent to
\[
H_u(m)=-s(m)\,\frac{bb^\top}{b_1+b_2}.
\]
Let \(u_{11}=-\dfrac{b_1^2}{b_1+b_2}s,\,
u_{12}=-\dfrac{b_1b_2}{b_1+b_2}s,\,
u_{22}=-\dfrac{b_2^2}{b_1+b_2}s.\)
Summing the first two and the last two gives
\[
\partial_1 s=u_{11}+u_{21}=u_{11}+u_{12}=-b_1 s,
\]
\[
\partial_2 s=u_{12}+u_{22}=-b_2 s.
\]
Thus
\[
\nabla s=-s\,b.
\]
Since \(U\) is connected, it follows that
\[
s(m)=C e^{-b^\top m}
\]
for some constant \(C\in\mathbb R\).
Substituting back,
\[
H_u(m)=\frac{C}{b_1+b_2} e^{-b^\top m}bb^\top.
\]
Let
\[
A:=\frac{C}{b_1+b_2}.
\]
Then
\[
H_u(m)=A e^{-b^\top m}bb^\top
=H\!\left(A e^{-b^\top m}\right).
\]
Therefore
\[
H\!\left(u-A e^{-b^\top m}\right)=0.
\]
Since \(U\) is connected, \(u-A e^{b^\top m}\) must be affine, so
\[
u(m)=A e^{-b^\top m}+\eta^\top m + D
\]
for some \(\eta\in\mathbb R^2\) and \(D\in\mathbb R\).

Finally,
\[
\partial_1u+\partial_2u
=A e^{-b^\top m}(b_1+b_2)+\eta^\top(1,1).
\]
But this must equal
\[
s(m)=C e^{-b^\top m}=A(b_1+b_2)e^{-b^\top m},
\]
so necessarily
\[
\eta^\top(1,1)=0.
\]
This proves the characterization. The converse was obvious.
\end{proof}

\subsection{Proof for \Cref{cor:r1-const-risk-averse}}
\begin{proof}
\[
u(m^A)=A e^{-(z^{\mathrm{ini}}+b_1\Delta P_A)}
+\eta_1(P_A^{\mathrm{ini}}+\Delta P_A)+\eta_2P_B^{\mathrm{ini}}+D,
\]
and
\[
u(m^B)=A e^{-(z^{\mathrm{ini}}+b_2\Delta P_B)}
+\eta_1P_A^{\mathrm{ini}}+\eta_2(P_B^{\mathrm{ini}}+\Delta P_B)+D.
\]
Hence
\[
u(m^A)-u(m^B)
=
A e^{-z^{\mathrm{ini}}}
\bigl(e^{-b_1\Delta P_A}-e^{-b_2\Delta P_B}\bigr)
+\eta_1\Delta P_A-\eta_2\Delta P_B.
\]
Using \(\eta^\top(1,1)=0\), i.e. \(\eta_2=-\eta_1\), this becomes
\[
u(m^A)-u(m^B)
=
A e^{-z^{\mathrm{ini}}}
\bigl(e^{-b_1\Delta P_A}-e^{-b_2\Delta P_B}\bigr)
+\eta_1(\Delta P_A+\Delta P_B).
\]
\end{proof}

\subsection{Proof for \Cref{prop:risk-neutral}}
\begin{proof}
If the ratio is identically zero, then necessarily
\[
H_u(m)=0
\qquad\text{for all }m\in S^\circ.
\]
A \(C^2\) function with vanishing Hessian on a connected open set is affine, so
\[
u(m)=a+\beta^\top m.
\]
Conversely, if \(u(m)=a+\beta^\top m\), then \(H_u(m)=0\) for all \(m\), and therefore the constant-risk-aversion equation with \(M=0\) is satisfied.
\end{proof}

\subsection{Proof for \Cref{cor:risk-neutral}}
\begin{proof}
\[
    u(m^A)=a+\beta_1(P_A^{\mathrm{ini}}+\Delta P_A)+\beta_2 P_B^{\mathrm{ini}},
    \]
    and
    \[
    u(m^B)=a+\beta_1P_A^{\mathrm{ini}}+\beta_2(P_B^{\mathrm{ini}}+\Delta P_B).
    \]
    Therefore
    \[
    u(m^A)-u(m^B)=\beta_1\Delta P_A-\beta_2\Delta P_B.
\]
\end{proof}

\subsection{Proof for \Cref{prop:r1-decrease-risk-averse}}
\begin{proof}
Let
\[
s(m):=\partial_1u(m)+\partial_2u(m).
\]
The displayed equation is equivalent to
\[
H_u(m)
=
-\,s(m)\,\rho(b^\top m)\,\frac{bb^\top}{b_1+b_2}.
\]
Hence
\[
u_{11}= -\rho(b^\top m)\frac{b_1^2}{b_1+b_2}s,\qquad
u_{12}= -\rho(b^\top m)\frac{b_1b_2}{b_1+b_2}s,\qquad
u_{22}= -\rho(b^\top m)\frac{b_2^2}{b_1+b_2}s.
\]
Summing gives
\[
\partial_1 s=u_{11}+u_{12}=-\rho(b^\top m)b_1 s,
\qquad
\partial_2 s=u_{12}+u_{22}=-\rho(b^\top m)b_2 s.
\]
Therefore
\[
\nabla s = -\rho(b^\top m)\,s\,b.
\]
It follows that \(s\) depends only on \(z=b^\top m\), say \(s(m)=\sigma(z)\), where
\[
\sigma'(z)=-\rho(z)\sigma(z).
\]
Choose \(F\) so that
\[
F'(z)=\frac{\sigma(z)}{b_1+b_2}.
\]
Then
\[
F''(z)=-\rho(z)F'(z),
\]
which is exactly \eqref{eq:DRA-one-dimensional-ode}. Moreover,
\[
H\!\big(F(b^\top m)\big)=F''(b^\top m)\,bb^\top=H_u(m),
\]
so \(u-F(b^\top m)\) has zero Hessian on the connected set \(U\), hence is affine:
\[
u(m)=F(b^\top m)+\eta^\top m + D.
\]
Finally,
\[
\partial_1u(m)+\partial_2u(m)
=
(b_1+b_2)F'(b^\top m)+\eta^\top(1,1)
=
s(m),
\]
so necessarily
\[
\eta^\top(1,1)=0.
\]
The converse is immediate by direct differentiation.
\end{proof}

\subsection{Proof for \Cref{cor:r1-linear-decrease-risk-averse}}
\begin{proof}
From Proposition \ref{prop:r1-decrease-risk-averse},
\[
-\frac{F''(w)}{F'(w)}=\rho(w)=-kw,
\]
so
\[
\frac{d}{dw}\log F'(w)=kw.
\]
Integrating gives
\[
\log F'(w)=\frac{k}{2}w^2+\log C,
\]
hence
\[
F'(w)=C e^{\frac{k}{2}w^2}.
\]
Integrating once more yields
\[
F(w)=C\int_0^w e^{\frac{k}{2}t^2}\,dt + D.
\]
\end{proof}

\subsection{Proof for \Cref{cor:r1-exp-decrease-risk-averse}}
\begin{proof}
From Proposition \ref{prop:r1-decrease-risk-averse},
\[
-\frac{F''(w)}{F'(w)}=\rho(w)=e^{-\gamma w},
\]
so
\[
\frac{d}{dw}\log F'(w)=-e^{-\gamma w}.
\]
Integrating gives
\[
\log F'(w)=\frac{1}{\gamma}e^{-\gamma w}+\log C,
\]
hence
\[
F'(w)=C\exp\!\Bigl(\frac{1}{\gamma}e^{-\gamma w}\Bigr).
\]
Integrating once more yields
\[
F(w)=C\int_0^w \exp\!\Bigl(\frac{1}{\gamma}e^{-\gamma t}\Bigr)\,dt + D.
\]
\end{proof}

\subsection{Proof for \Cref{cor:r1-decrease-risk-averse}}
\begin{proof}
\[
u(m^A)=F(z^{\mathrm{ini}}+b_1\Delta P_A)
+\eta_1(P_A^{\mathrm{ini}}+\Delta P_A)+\eta_2P_B^{\mathrm{ini}}+D,
\]
and
\[
u(m^B)=F(z^{\mathrm{ini}}+b_2\Delta P_B)
+\eta_1P_A^{\mathrm{ini}}+\eta_2(P_B^{\mathrm{ini}}+\Delta P_B)+D.
\]
Therefore
\[
u(m^A)-u(m^B)
=
F(z^{\mathrm{ini}}+b_1\Delta P_A)-F(z^{\mathrm{ini}}+b_2\Delta P_B)
+\eta_1\Delta P_A-\eta_2\Delta P_B.
\]
Using \(\eta_2=-\eta_1\), this simplifies to
\[
u(m^A)-u(m^B)
=
F(z^{\mathrm{ini}}+b_1\Delta P_A)-F(z^{\mathrm{ini}}+b_2\Delta P_B)
+\eta_1(\Delta P_A+\Delta P_B).
\]
\end{proof}

\end{appendices}

\end{document}